%% file: main.tex
\documentclass{iopjournal}

\usepackage{graphicx}
\usepackage{subcaption}

\usepackage{amsmath,amsfonts}
\newtheorem{theorem}{Theorem}
\newtheorem{corollary}{Corollary}
\newenvironment{proof}{\paragraph{\textit{Proof:}}}{}

\newtheorem{remark}{Remark}
\usepackage{comment}
\usepackage[utf8]{inputenc} 
\usepackage[T1]{fontenc}    
\usepackage{hyperref}       
\usepackage{url}            
\usepackage{booktabs}       
\usepackage{amsfonts}       
\usepackage{nicefrac}       
\usepackage{microtype}      
\usepackage{algorithm} 
\usepackage{lipsum}		
\usepackage{graphicx}
\usepackage{amsmath}
\usepackage{algorithm} 
\usepackage{algpseudocode} 
\usepackage{bbm}
\usepackage{amsfonts}
\usepackage{natbib}
\usepackage{algpseudocode} 
\usepackage{doi}
\usepackage{tikz}
\usetikzlibrary{decorations.pathreplacing,arrows.meta}

\input{Macros.tex}

\begin{document}

\articletype{Paper} 

\title{An optimal control approach  for neural network  architecture adaptation with a posteriori error estimation}

\author{C G Krishnanunni$^1$, Thomas Scott$^1$,  Tan Bui-Thanh$^{1,*}$}

\affil{$^1$Department of Aerospace Engineering \& Engineering Mechanics, UT Austin}

\affil{$^*$Oden Institute for Computational Engineering and Sciences,  UT Austin.}

\email{krishnanunni@utexas.edu}

\keywords{Neural architecture adaptation, A posteriori error estimation, Optimal control theory.}

\begin{abstract}
This work presents a novel approach for adapting neural network architecture along the depth based on a posteriori error estimation. By formulating neural network training as a continuous-time optimal control problem, we derive rigorous error estimates that quantify how approximation error distributes across network layers. This error decomposition enables a principled depth adaptation strategy: new layers are inserted at locations of maximum estimated error, allowing the network to efficiently capture complex, nonlinear variations in the underlying problem. Our framework introduces a novel network architecture that treats weights and biases as piecewise linear functions varying across layers, with the error estimator bounding the discrepancy between this discrete representation and the true continuous optimal control solution. The approach leverages dual weighted residual methodology from finite element analysis to derive computable upper bounds on the functional error. A key theoretical contribution is the derivation of explicit error bounds that decompose the total approximation error into interval-wise contributions, providing a rigorous basis for targeted architecture refinement. We demonstrate the effectiveness of our method on scientific datasets, including learning the observable-to-parameter map for the Navier-Stokes equation. Numerical results reveal that our approach consistently outperforms existing architecture adaptation methods in terms of generalization performance.
\end{abstract}

\section{Introduction}

Depth of a neural network plays a key role in the observed empirical success of deep learning. Stacking layers lets a network build successively more abstract features out of raw input data \cite{hinton2007learning,zeiler2014visualizing}.  Several of the architectures responsible for the largest jumps in benchmark performance over the past decade differ from their predecessors mainly in how many layers they contain \cite{simonyan2014very,szegedy2015going}. A key question is how many layers does a given task actually require, and how should the available parameters be distributed among them? In practice this question is still settled almost entirely by trial and error or a metaheuristic optimization-based search procedure \cite{suganuma2017genetic,liu2021survey}.  A network with too little depth and small width may be unable to represent the target function at all, whereas an unnecessarily deep network wastes compute and is more prone to overfitting. A more systematic, less wasteful approach to setting network depth therefore has clear practical value.

Two broad families of methods have been proposed to automate this choice. The first treats architecture selection as a search problem.  A population or sequence of candidate networks is proposed, evaluated using evolutionary computation \cite{suganuma2017genetic,liu2021survey}, reinforcement learning \cite{zoph2016neural}, or randomized search \cite{li2020random}, and the best candidate is retained. These neural architecture search (NAS) methods can produce excellent architectures, but doing so typically requires training large numbers of candidates which makes the overall procedure costly, particularly when a single training run is itself expensive. The second family avoids searching a discrete space of architectures altogether. Instead, a single small network is trained and then grown, with new parameters added as training proceeds rather than fixed in advance \cite{wynne1991node,evci2022gradmax,wu2020firefly,krishnanunni2025topological,krishnanunni2025adaptive}.

Within this second family, the bulk of prior work targets towards growing the width of a network rather than its depth. Wynne-Jones \cite{wynne1991node} grew networks by splitting neurons identified through a principal component analysis. Firefly algorithm by Wu et al. \cite{wu2020firefly} generates candidate neurons that either split existing
neurons with noise or are completely new and selects those with the highest gradient norm.  A related line of work \cite{wu2019splitting} frames neuron splitting as a steepest descent step over probability measures in Wasserstein space. Evci et al. \cite{evci2022gradmax} instead select new neurons directly from gradient information, choosing those most likely to reduce the loss once activated. For growing an architecture along the depth,  Bengio et al.  \cite{bengio2006greedy} proposed a greedy layerwise unsupervised learning algorithm where the initial layers of a network are supposed to represent more abstract concepts that explain the input observation, whereas subsequent layers extract low-level features. This idea was later adapted to supervised learning by Hettinger et al.'s Forward Thinking algorithm \cite{hettinger2017forward}.  Net2Net \cite{chen2015net2net} takes a different route, inserting a layer initialized to be function preserving transformation such that the deeper network starts at exactly the loss value of its shallower predecessor before any retraining occurs. 
 Sensli algorithm \cite{kreis2023sensli} builds on Net2Net by also choosing where the new layer goes, based on a sensitivity criterion, though its initialization of the inserted layer remains the same regardless of the training data or the chosen insertion site. Recently, we have proposed two depth growing strategies of our own. The first is a manifold regularized greedy layerwise training approach for adapting a neural architecture along its depth \cite{krishnanunni2025adaptive}. The second computes a closed-form topological derivative borrowed from topology optimization to decide both the location and initialization of an added new layer \cite{krishnanunni2025topological}.

What is missing from most of this literature is a quantitative link between where a new layer is added and how the network's approximation error decomposes across its layers. Such a measure becomes available once a residual network is viewed not as a stack of discrete layers but as a discretization of a continuous-time dynamical system. Under this viewpoint, training amounts to solving a discrete-time approximation of an optimal control problem in which the controls are the layer-wise weights and biases \cite{li2018optimal,chen2018neural}. Reframing network training this way opens the door to importing a substantial body of numerical analysis machinery built for exactly this kind of problem (architecture adaptation). In particular, the dual weighted residual (DWR) approach \cite{becker2001optimal,kraft2010dual} was developed in the finite element community to answer a structurally identical question: given a discretized solution to a PDE constrained optimization problem, where in the computational domain should the mesh be refined to most reduce the error in a quantity of interest \cite{becker2001optimal,kraft2010dual} ? It answers this question by writing the error as a sum of local, computable residuals attached to individual mesh elements.

This paper develops a procedure for growing neural network architecture along the depth, motivated by the dual weighted residual (DWR) approach. Rather than treating the standard forward Euler discretization used in practice as the starting point, we represent a network's weights and biases as piecewise linear functions of a continuous depth variable and discretize the underlying optimal control problem with a finite element method (section \ref{t_sp}). Theorem \ref{theo_one} gives an exact expression for the gap between the loss achieved by such a discretized network and the loss that would be achieved if its parameters varied continuously with depth. Corollary \ref{cor_one} converts this expression into a fully computable upper bound that splits additively across the layer-to-layer intervals of the network (section \ref{er_est}), directly identifying which interval is contributing the most error. This interval-wise bound drives the adaptation procedure in Algorithm \ref{Algo_full}. A network of fixed depth is trained, the per-interval error is estimated, a new layer is inserted at the interval with the largest estimated error and initialized by interpolating its two neighbors, and the cycle repeats as long as a held-out validation loss keeps improving. In this way, the three design choices that any depth growing scheme must make, namely i) where to insert a layer, ii) when to stop inserting layers, and iii) how to initialize each new layer, are all resolved. Section \ref{num_d} reports numerical results on a synthetic two-dimensional regression problem and on a severely ill-posed inverse problem for the Navier-Stokes equation, where the resulting networks generalize better than those produced by Net2Net \cite{chen2015net2net}, Forward Thinking \cite{hettinger2017forward}, and random layer insertion.

\section{Mathematical framework}

In this work, all matrices are denoted by boldface capital letters, while all vectors are denoted by boldface lowercase letters. Consider a regression/classification task where one is provided with $S$ training data points, input data dimension $n_0$, and label dimension $n_{T+1}$. Let the inputs $\bx_s \in \mathbb{R}^{n_0}$  
for $s \in \{1,2,...S\}$ be organized
row-wise into a matrix $\mathbf{X} \in \mathbb{R}^{S\times {n_0}}$ and the corresponding true labels be denoted as $\bc_s\in \mathbb{R}^{n_{T+1}}$ and  stacked row-wise as $\mathbf{C} \in \mathbb{R}^{S\times {n_{T+1}}}$.

\subsection{Neural network training problem}
Let us consider a fully connected neural network (FNN) employing residual connections \cite{he2016deep}. We begin by definining the mesh $t_1<t_2<....<t_T$, with step size $h_{n-1} =t_n-t_{n-1}$ and the corresponding intervals be denoted as $I_{n-1}=(t_{n-1},t_n)$, $n=2,3,\dots, T$. 
Given  the inputs $\bx_s$  
and the corresponding true labels  $\bc_s$, we consider the following neural network training problem:

Find $\LRp{\bW^h_0,\dots,\bW^h_{T},\bb^h_0,\dots \bb^h_{T}}$ that solves the following optimal control problem:
\begin{equation}
 \begin{aligned}
   \text{minimize} &\ \ \frac{1}{S}\sum_{s=1}^S\mathcal{J}\LRp{  \bx_{s;T+1}^h} \\ 
    \text{subject  to} \ \  
     &\ \  \bx^h_{s;1}=\sigma_1\LRp{\bW^h_0\bx^h_{s;0}+\bb^h_0},\quad \bx_{s;0}^h=\bx_s,\\
    & \ \ \bx_{s;t+1}^h=\bx_{s;t}^h+h_{t} \ \sigma\LRp{\bW^h_{t}\bx^h_{s;t}+\bb^h_{t}}, \ \ \forall t\in 1,\dots T-1,\\
    &  \bx^h_{s;T+1}=\sigma_{T+1}\LRp{\bW^h_{T}\bx^h_{s;T}+\bb^h_{T}}
    \end{aligned}   
\label{res_hid_dis}
\end{equation}
where $ s\in \{1,\dots S\}$, $\bx_{s;t}$ denotes the state at the $t-$th layer of the network corresponding to $s$-th training sample,  $\sJ$ denotes the loss function, $\sigma_1(.),\ \sigma(.), \ \sigma_{T+1}(.)$ are  activation functions acting component-wise on the inputs, $\bW^h_t\in \real^{n_{t+1}\times n_{t}}$, $\bb^h_t\in \real^{n_{t+1}}$ denote the weights and biases, $n_t$ denotes  the number of neurons in the $t-$th layer,  $T$ is the total number of hidden layers in the network.

\subsection{Neural network training: Continuous problem}

Consider \eqref{res_hid_dis} where $h_t\rightarrow 0$. 
In the continuum, therefore
one has the following continuous optimal control problem \cite{benning2019deep} for training a neural network:

Find $\LRp{\bW_0,\bW_{T},\bb_0, \bb_{T}}$,  $\bW(t)\in \sUw$, $\bb(t)\in \sUb$ that solves the following optimal control problem:
\begin{equation}
 \begin{aligned}
  \text{minimize} &\ \ \frac{1}{S}\sum_{s=1}^S\mathcal{J}\LRp{  \bx_{s;T+1}} \\   
    \text{subject  to} \ \  
    &\ \  \bx_s(t_1)=\sigma_1\LRp{\bW_0\bx_{s;0}+\bb_0},\quad \bx_{s;0}=\bx_s,\\
    & \ \ \dot{\bx}_s(t)=\sigma\LRp{\bW(t)\bx_s(t)+\bb(t)}, \ \ t_1\leq t \leq t_T,\\
     &  \bx_{s;T+1}=\sigma_{T+1}\LRp{\bW_{T}\bx_{s}(t_T)+\bb_{T}},
    \end{aligned}   
\label{stat_final}
\end{equation}
where $ \sUw,\ \sUb$ are appropriate spaces that one assumes, and we also assumed that $\xb_s(t)$ is continuous and differentiable everywhere.
\paragraph{Goal}
Our goal in this work is to derive an expression (upper bound) for the following quantity using only the solution from \eqref{res_hid_dis}:
\begin{equation}
\snor{\frac{1}{S}\sum_{s=1}^S\mathcal{J}\LRp{  \bx_{s;T+1}^h}- \frac{1}{S}\sum_{s=1}^S\mathcal{J}\LRp{  \bx_{s;T+1}}}.
\label{goal}
\end{equation}
If one can characterize how the error in \eqref{goal} is distributed across the intervals $I_{n-1}$ in \eqref{res_hid_dis}, then a mesh refinement strategy can be devised. In particular, inserting a new hidden layer at the location of maximal error leads naturally to an architecture (depth) adaptation framework for neural networks.

\begin{remark}
To derive an estimate for \eqref{goal}, we employ the dual weighted residual approach for a posteriori error estimation developed in \cite{kraft2010dual}. The dual weighted residual framework is variational in nature and relies on discretizing \eqref{stat_final} using a finite element method.

However, the discrete optimal control problem \eqref{res_hid_dis} is obtained by applying a forward Euler scheme to discretize the ordinary differential equation (ODE) in \eqref{stat_final}. Consequently, it is necessary to first reformulate and rederive \eqref{res_hid_dis} by discretizing the continuous optimal control problem \eqref{stat_final} using a finite element method.
\end{remark}
To this end, we restrict our discussion to the case where the input weights and biases, namely $\bW_0$ and $\bb_0$, are fixed and given. We also assume that the output weights and biases, $\bW_T$ and $\bb_T$, are fixed and given.
Under these assumptions, the optimal control problem \eqref{stat_final} can be rewritten as follows:

Find   $\bW(t)\in \sUw$, $\bb(t)\in \sUb$ that solves:
\begin{equation}
 \begin{aligned}
  \text{minimize} & \ \ \tilde{\mathcal{J}}\LRp{\bx_1(t_T),\ \bx_2(t_T),\dots \bx_S(t_T)}=\frac{1}{S}\sum_{s=1}^S\mathcal{J}\LRp{ \sigma_{T+1}(\bW_{T}\bx_s(t_T)+\bb_{T})}, \\   
    \text{subject  to} \ \  
    & \ \ \dot{\bx}_s(t)=\sigma\LRp{\bW(t)\bx_s(t)+\bb(t)}, \ \ t_1\leq t \leq t_T,
    \end{aligned}   
\label{stat_final_assumption}
\end{equation}
where the initial condition $\bx_s(t_1)$  computed as $\bx_s(t_1)=\sigma_1(\bW_0\bx_{s;0}+\bb_0)$.  

\section{True space and Finite element spaces}
\label{t_sp}
Before discretizing \eqref{stat_final_assumption}, we need to first define 
 the true spaces $\sUw,\ \sUb$ in \eqref{stat_final_assumption}. Let us consider  the mesh $t_1<t_2<....<t_T$, with step size $h_{n-1} =t_n-t_{n-1}$ and the corresponding intervals be denoted as $I_{n-1}=(t_{n-1},t_n)$, $n=2,3,\dots, T$.  The spaces $\sUw,\ \sUb$ are defined as follows:
\begin{equation}
    \begin{aligned}
       & \sUw=\Big \{\bW\in C^0\LRp{[t_1,t_T],\ \ \real^{n_1\times n_1}}:\ \bW|_{I_n}\in C^2\LRp{I_n,\ \real^{n_1\times n_1}},\ n=1,\dots T-1\Big \},\\
      &  \sUb=\Big \{\bb\in C^0\LRp{[t_1,t_T],\ \ \real^{n_1}}:\ \bb|_{I_n}\in C^2\LRp{I_n,\ \real^{n_1}}\ n=1,\dots T-1\Big \},
    \end{aligned}
    \label{true_space}
\end{equation}
where $C^2\LRp{I_n,\ .}$ denotes the space of twice continuously differentiable functions on interval $I_n$,  $C^0\LRp{[t_1,\ t_T],\ .}$ denotes the space of continuous functions on interval $[t_1,\ t_T]$.

Now let us assume the corresponding finite element spaces $\LRp{\sUw}_h$ and $\LRp{\sUb}_h$ as follows:
\begin{equation}
    \begin{aligned}
       & \LRp{\sUw}_h=\{\bW^h\in C^0\LRp{[t_1,t_T],\ \ \real^{n_1\times n_1}}:\ \bW^h|_{I_n}\in P^1\LRp{I_n,\ \real^{n_1\times n_1}}\ n=1,\dots T-1\},\\
       & \LRp{\sUb}_h=\{\bb^h\in C^0\LRp{[t_1,t_T],\ \ \real^{n_1}}:\ \bb^h|_{I_n}\in P^1\LRp{I_n,\ \real^{n_1}}\ n=1,\dots T-1\},
    \end{aligned}
    \label{fem_space}
\end{equation}
where $P^1\LRp{I_n,\ .}$  denotes the space of polynomials of degree at most one on the interval $I_n$.
Note that one has $ \LRp{\sUw}_h\subset  \sUw$ and  $ \LRp{\sUb}_h\subset  \sUb$. 

In particular, with the assumed spaces in \eqref{fem_space}, on each interval $I_{n-1},\ n=2,\dots T$ we have the following representation for the weights and biases:
\begin{equation}
\begin{aligned}
 &\bW^h(t)=\LRp{1-\frac{t-t_{n-1}}{h_{n-1}}}\bW^h(t_{n-1})+\LRp{\frac{t-t_{n-1}}{h_{n-1}}}\bW^h(t_{n}),\quad t_{n-1}\leq t\leq t_n\\
 & \bb^h(t)=\LRp{1-\frac{t-t_{n-1}}{h_{n-1}}}\bb^h(t_{n-1})+\LRp{\frac{t-t_{n-1}}{h_{n-1}}}\bb^h(t_{n}), \quad t_{n-1}\leq t\leq t_n,
    \end{aligned}
    \label{rep}
\end{equation}
where $\bW^h(t_{n})$, $\bb^h(t_{n})$ are the weights and biases prescribed at time $t=t_n$.

\subsection{Deriving the a-posteriori error estimate for neural network}

With the spaces introduced in \eqref{true_space} and \eqref{fem_space}, we will now consider two optimal control problem for neural network training as follows:
\paragraph{True neural network training:}

 Find   $\bW(t)\in \sUw$, $\bb(t)\in \sUb$ such that:
\begin{equation}
 \begin{aligned}
  \text{minimize} & \ \ \tilde{\mathcal{J}}\LRp{\{\bx_s(t_T)\}_{s=1}^S  }=\frac{1}{S}\sum_{s=1}^S\mathcal{J}\LRp{ \sigma_{T+1}(\bW_{T}\bx_s(t_T)+\bb_{T})}, \\   
    \text{subject  to} \ \  
    & \ \ \dot{\bx}_s(t)=\sigma\LRp{\bW(t)\bx_s(t)+\bb(t)}, \ \ t_1\leq t \leq t_T,
    \end{aligned}   
\label{stat_final_true}
\end{equation}
where the notation $\tilde{\mathcal{J}}\LRp{\{\bx_s(t_T)\}_{s=1}^S  }$ means $\tilde{\mathcal{J}}\LRp{\{\bx_s(t_T)\}_{s=1}^S  }=\tilde{\mathcal{J}}\LRp{\bx_1(t_T),\ \bx_2(t_T),\dots \bx_S(t_T)}$.
\paragraph{Coarse neural network training:} Find   $\bW^h(t)\in \LRp{\sUw}_h$, $\bb^h(t)\in \LRp{\sUb}_h$ such that:
\begin{equation}
 \begin{aligned}
  \text{minimize} & \ \ \tilde{\mathcal{J}}\LRp{\{\bx_s^h(t_T)\}_{s=1}^S  }=\frac{1}{S}\sum_{s=1}^S\mathcal{J}\LRp{ \sigma_{T+1}(\bW_{T}\bx_s^h(t_T)+\bb_{T})}, \\   
    \text{subject  to} \ \  
    & \ \ \dot{\bx}^h_s(t)=\sigma(\bW^h(t)\bx^h_s(t)+\bb^h(t)), \ \ t_1\leq t \leq t_T,
    \end{aligned}   
\label{stat_final_fem}
\end{equation}
In \eqref{stat_final_true} and \eqref{stat_final_fem}, we assume that $\bx_s(t),\ \bx_s^h(t)$ lies in the space of continuously differentiable functions on $[t_1,\ t_T]$. Our objective now is to derive an expression (upper bound) for the following quantity using only the solution from \eqref{stat_final_fem}:
\begin{equation}
\snor{\tilde{\mathcal{J}}\LRp{\{\bx_s(t_T)\}_{s=1}^S  }-\tilde{\mathcal{J}}\LRp{\{\bx_s^h(t_T)\}_{s=1}^S  }}\leq \sum_{n=1}^{T-1}\sE_n,
\label{goal_new}
\end{equation}
where $\sE_n$ is the contribution of error in interval $I_n$. In particular, we are interested in understanding how  the error in \eqref{goal_new} decomposes across different intervals $I_n$.
\begin{remark}
The goal of our approach is to characterize the error arising from the piecewise linear representation of weights and biases in the neural network. 
\begin{figure}[h!]
\hspace{-0.25 cm}
\includegraphics[scale=0.44, trim=0 0 0 2cm, clip]{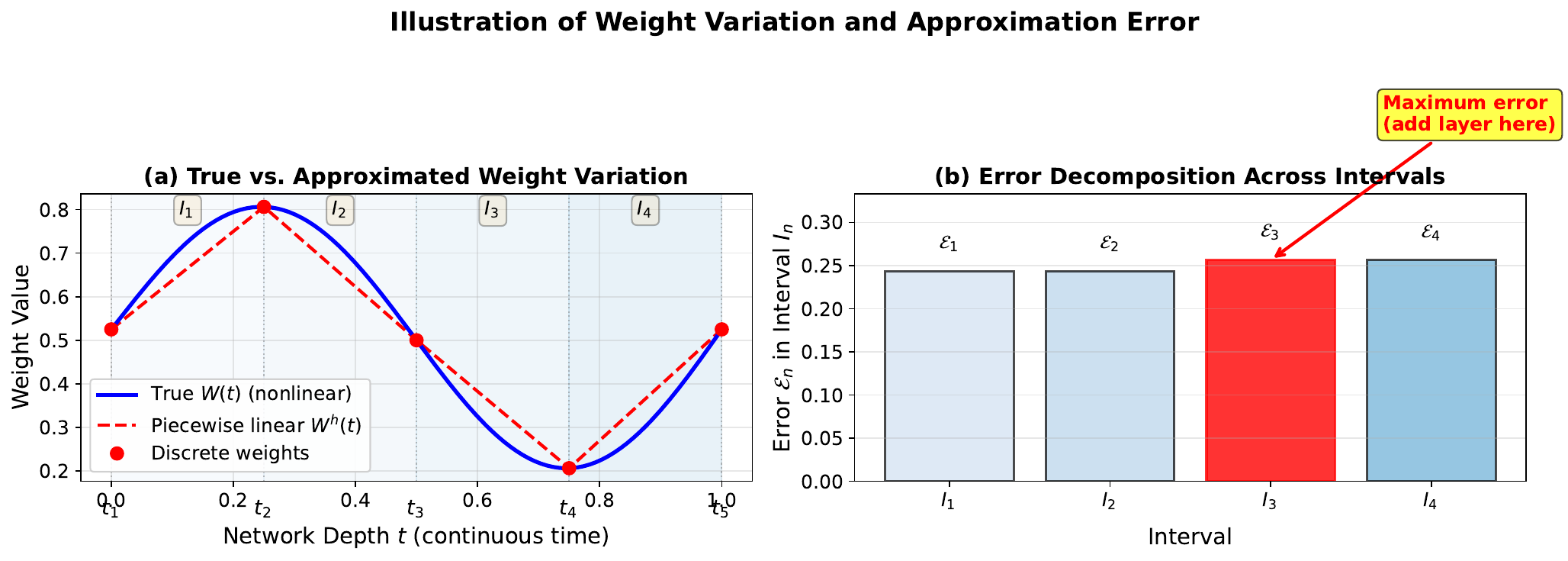}
\caption{Left to Right: The true space $ \sUw,\  \sUb$ and the finite element space $\LRp{\sUw}_h, \LRp{\sUb}_h$ for weights and biases; Decomposition of error \eqref{goal_new} across intervals and the adaptation procedure.}
\label{demo_fig}
\end{figure}
In Figure \ref{demo_fig}, we illustrate this concept: when solving the discrete optimal control problem \eqref{stat_final_fem}, we assume linear interpolation of the weights and biases between layers. However, if the true optimal weights and biases from problem \eqref{stat_final_true} vary nonlinearly, the linear approximation of weights and biases in \eqref{stat_final_fem} introduces error as quantified in \eqref{goal_new}. Our adaptive strategy targets this discrepancy by refining intervals (i.e., adding new layers) where the error is highest, thereby allowing the network to capture more complex, nonlinear parameter variations in those regions.
\end{remark}

\begin{remark}
\label{motiv_remark}
In formulating problem \eqref{stat_final_fem}, we deliberately avoid discretizing the state variables $\bx^h(t)$ using a finite element space. Instead, for the theoretical derivation of the error estimate \eqref{goal_new}, we assume the ODE in \eqref{stat_final_fem} is satisfied exactly. This assumption simplifies the analysis and leads to a cleaner error characterization focused solely on the parameter discretization/representation error.
In practice, once the error estimator is derived, we discretize the ODE in \eqref{stat_final_fem} using a forward Euler scheme with a sufficiently fine mesh, controlled by the discretization parameter K. This practical implementation is detailed in section \ref{discretize_state}. The rationale for avoiding finite element discretization of the states and adjoints is discussed in  remark \ref{no_fem}.
  
\end{remark}

\section{Forming the Lagrangian and analysing the optimality conditions}

To derive an error estimate of the form \eqref{goal_new}, we first examine the first-order optimality conditions for problem \eqref{stat_final_true}. The Lagrangian for \eqref{stat_final_true} can be written as:
\begin{equation}
\begin{aligned}
    \mathcal{L}(\{\bx_s(t)\}_{s=1}^S,\  \bW(t),\ \bb(t),\ \{\bz_s(t)\}_{s=1}^S)=&\tilde{\mathcal{J}}\LRp{\{\bx_s(t_T)\}_{s=1}^S  }\\
    &+\sum_{s=1}^S\int_{t_1}^{t_T} \LRp{\dot{\bx}_s(t)-\sigma(\bW(t)\bx_s(t)+\bb(t)),\ \bz_s(t)}\ dt,
\end{aligned}
    \label{lag_true}
\end{equation}
where $\LRp{.,\ .}$ denotes the Euclidean inner product in $\mathbb{R}^{n_1}$, and $\bz_s(t)$ is the adjoint variable which we assume lies in the space of continuously differentiable functions on $[t_1,\ t_T]$. We also assumed $\bx_s(t)$ lies in the space of continuously differentiable functions on $[t_1,\ t_T]$. Similarly, for \eqref{stat_final_fem} we have:
\begin{equation}
\begin{aligned}
    \mathcal{L}(\{\bx^h_s(t)\}_{s=1}^S,\  \bW^h(t),\ \bb^h(t),\ \{\bz^h_s(t)\}_{s=1}^S)=&\tilde{\mathcal{J}}\LRp{\{\bx_s^h(t_T)\}_{s=1}^S  }\\
    &+\sum_{s=1}^S\int_{t_1}^{t_T} \LRp{\dot{\bx}^h_s(t)-\sigma(\bW^h(t)\bx^h_s(t)+\bb^h(t)),\ \bz^h_s(t)}\ dt,
\end{aligned}
    \label{lag_fem}
\end{equation}
where $\bz^h_s(t)$ is the adjoint variable which we assume lies in the space of continuously differentiable functions on $[t_1,\ t_T]$. We also assumed $\bx^h_s(t)$ lies in the space of continuously differentiable functions on $[t_1,\ t_T]$.
The first-order optimality conditions for  \eqref{lag_fem} and  \eqref{lag_true} are:
\begin{equation}
    \mathcal{L}'\LRp{\{\bx^{h}_s(t)\}_{s=1}^S,\ \bW^{h}(t),\ \bb^{h}(t),\ \{\bz^{h}_s(t)\}_{s=1}^S,\ \be^h}=0,\ \forall \be^h,
    \label{first_order_discrete}
\end{equation}
\begin{equation}
\mathcal{L}'\LRp{\{\bx_s(t)\}_{s=1}^S,\ \bW(t),\ \bb(t),\ \{\bz_s(t)\}_{s=1}^S,\ \be}=0,\ \forall \be,
\label{first_order}
\end{equation}
where
\begin{equation}
\begin{aligned}
 &\mathcal{L}'\LRp{\{\bx^{h}_s(t)\}_{s=1}^S,\ \bW^{h}(t),\ \bb^{h}(t),\ \{\bz^{h}_s(t)\}_{s=1}^S,\ \be^h}\\
 &=\nabla \mathcal{L}\Big(
\{\bx^{h}_s(t)\}_{s=1}^S ,\ 
\bW^{h}(t),\ 
\bb^{h}(t) ,\ 
\{\bz^{h}_s(t)\}_{s=1}^S 
\Big)^{\top}
\begin{bmatrix}
\{\be^h_{\bx_s}\}_{s=1}^S \\
\be_{\bW}^h \\
\be_{\bb}^h \\
\{\be^h_{\bz_s}\}_{s=1}^S
\end{bmatrix}, 
\end{aligned}
\end{equation}
and $\be^h=\LRp{\{\be^h_{\bx_s}\}_{s=1}^S,\ \be^h_{\bW},\ \be^h_{\bb},\ \{\be^h_{\bz_s}\}_{s=1}^S},\ \be=\LRp{\{\be_{\bx_s}\}_{s=1}^S,\ \be_{\bW},\ \be_{\bb},\ \{\be_{\bz_s}\}_{s=1}^S}$ are variations in appropriate spaces, $\nabla \mathcal{L} $ denotes the gradient of the Lagrangian with respect to all variables. We now analyze these first-order optimality conditions in the context of neural network training.

\subsubsection{Forward propagation equations}

Taking the variation of the Lagrangian $\mathcal{L}$ in \eqref{lag_true} with respect to the adjoint variable $\bz_s(t)$, we obtain the weak form:
\begin{equation}
\int_{t_1}^{t_T} \LRp{\dot{\bx}_s(t)-\sigma(\bW(t)\bx_s(t)+\bb(t)),\ \be_{\bz_s}(t)}\ dt = 0, \quad \forall \be_{\bz_s}(t).
\label{weak_forward}
\end{equation}
Since $\bz_s(t)$ lies in the space of continuously differentiable functions and $\be_{\bz_s}(t)$ is an arbitrary variation in this space,  \eqref{weak_forward} must hold for all test functions $\be_{\bz_s}(t)$. By the fundamental lemma of calculus of variations, this implies that the integrand must vanish pointwise, yielding the strong form:
\begin{equation}
\begin{aligned}
    \dot{\bx}_s(t)=\sigma(\bW(t)\bx_s(t)+\bb(t)), \quad t_1\leq t \leq t_T,
\end{aligned}    
\label{forward_prop_cont}
\end{equation}
where $\bx_s(t_1)$ is computed as $\bx_s(t_1)=\sigma_1(\bW_0\bx_{s;0}+\bb_0)$. Similarly, for \eqref{lag_fem} we have:
\begin{equation}
\begin{aligned}
    \dot{\bx}^h_s(t)=\sigma(\bW^h(t)\bx^h_s(t)+\bb^h(t)), \quad t_1\leq t \leq t_T,
\end{aligned}    
\label{forward_prop_fem}
\end{equation}
where $\bx^h_s(t_1)$ is computed as $\bx^h_s(t_1)=\sigma_1(\bW_0\bx_{s;0}+\bb_0)$. 
\begin{remark}[Using the strong form \eqref{forward_prop_fem} instead of a weak form]
\label{no_fem}
We employ the strong form of the state and adjoint equations rather than a weak finite element formulation for the theoretical error analysis. We note that a weak, discontinuous-in-time Petrov–Galerkin formulation can recover the forward Euler method \cite{munoz2019explicit}.

 In this work, however, we choose instead to discretize the strong form of the state and adjoint equations directly with a forward Euler scheme, since this avoids introducing additional finite-element residual terms in the error estimate beyond the parameter discretization/representation error  (see remark \ref{intention}). To control the resulting state/adjoint discretization error, we employ a sufficiently fine sub-mesh, governed by the discretization parameter K (see section \ref{discretize_state}), ensuring that the numerical approximation of the states and adjoints remains accurate.

\end{remark}
\subsubsection{Adjoint equations}

We now consider the variation of the Lagrangian $\mathcal{L}$ in \eqref{lag_true} with respect to the state variable $\bx(t)$. To derive the ODE for $\bz(t)$, we apply integration by parts:
\begin{align*}
 \int_{t_1}^{t_T} &\LRp{\dot{\bx}_s(t)-\sigma(\bW(t)\bx_s(t)+\bb(t)),\ \bz_s(t)}\ dt\\
 &=\int_{t_1}^{t_T}\LRp{\dot{\bx}_s(t),\ \bz_s(t)} \ dt- \int_{t_1}^{t_T}\LRp{\sigma(\bW(t)\bx_s(t)+\bb(t)),\ \bz_s(t)}\ dt\\
&=\LRp{{\bx}_s(t_T),\ \bz_s(t_T)} - \LRp{{\bx}_s(t_1),\ \bz_s(t_1)}- \int_{t_1}^{t_T}\LRp{\bx_s(t),\ \dot{\bz}_s(t)}\ dt\\
&\quad - \int_{t_1}^{t_T}\LRp{\sigma(\bW(t)\bx_s(t)+\bb(t)),\ \bz_s(t)}\ dt.
\end{align*}
Taking the variation of $\mathcal{L}$ with respect to $\bx_s(t)$:
\begin{align*}
 &\LRp{\nabla_s \tilde{\mathcal{J}}\LRp{\{\bx_s(t_T)\}_{s=1}^S  },\ \be_{\bx_s}(t_T)}+\LRp{\be_{\bx_s}(t_T),\ \bz_s(t_T)} - \LRp{\be_{\bx_s}(t_1),\ \bz_s(t_1)}\\
&\quad - \int_{t_1}^{t_T}\LRp{\be_{\bx_s}(t),\ \dot{\bz_s}(t)}\ dt- \int_{t_1}^{t_T}\LRp{\frac{\partial{\sigma(\bW(t)\bx_s(t)+\bb(t))}}{\partial \bx(t)}\be_{\bx_s}(t),\ \bz_s(t)}\ dt=0,\ \ \forall \be_{\bx_s}(t),
\end{align*}
where $\nabla_s \tilde{\mathcal{J}}\LRp{\{\bx_s(t_T)\}_{s=1}^S  }$ denotes the gradient with respect to the variable $\bx_s(t_T)$. Since $\be_{\bx_s}(t_1)=0$ (initial condition is prescribed as $\bx_s(t_1)=\sigma_1(\bW_0\bx_{s;0}+\bb_0)$), the above condition can be simplified as:
\begin{equation}
\begin{aligned}
  &  \LRp{\be_{\bx_s}(t_T),\ \bz_s(t_T)+  \nabla_s \tilde{\mathcal{J}}\LRp{\{\bx_s(t_T)\}_{s=1}^S  }}\\
   & -\int_{t_1}^{t_T}\LRp{\be_{\bx_s}(t),\ \dot{\bz}_s(t)+\LRs{\frac{\partial{\sigma(\bW(t)\bx_s(t)+\bb(t))}}{\partial \bx_s(t)}}^T\bz_s(t)}\ dt =0 ,\ \ \forall \be_{\bx_s}(t).
    \label{weak_form_x}
    \end{aligned}
\end{equation}
 Since $\bx_s(t)$ lies in the space of continuously differentiable functions and $\be_{\bx_s}(t)$ is an arbitrary variation in this space,  \eqref{weak_form_x} must hold for all test functions $\be_{\bx_s}(t)$. This yields the  following  adjoint equations:
\begin{equation}
\begin{aligned}
&\dot{\bz}_s(t)+\LRs{\frac{\partial{\sigma(\bW(t)\bx_s(t)+\bb(t))}}{\partial \bx(t)}}^T\bz_s(t)=0, \\
&\bz_s(t_T)=- \nabla_s \tilde{\mathcal{J}}\LRp{\{\bx_s(t_T)\}_{s=1}^S  }.
\end{aligned}
\label{adjoint_cont}
\end{equation}    
Similarly, for \eqref{lag_fem} we have the following adjoint equations:
\begin{equation}
\begin{aligned}
&\dot{\bz}_s^h(t)+\LRs{\frac{\partial{\sigma(\bW^h(t)\bx_s^h(t)+\bb^h(t))}}{\partial \bx_s^h(t)}}^T\bz_s^h(t)=0, \\
&\bz^h_s(t_T)=-\nabla_s \tilde{\mathcal{J}}\LRp{\{\bx_s^h(t_T)\}_{s=1}^S  }.
\end{aligned}
\label{adjoint_fem}
\end{equation}    
Note that \eqref{adjoint_cont}, and \eqref{adjoint_fem} are solved backwards in time.
\section{Error estimation}
\label{er_est}
With this setting, we are now ready to derive an error estimate for \eqref{goal_new}.
Theorem \ref{theo_one} provides an estimate for the error in \eqref{goal_new}.
\begin{theorem}
\label{theo_one}
Let $\bx_s(t),\  \bW(t),\ \bb(t),\ \bz_s(t)$ are solutions that satisfy the first order optimality condition \eqref{first_order}, and $\bx^h_s(t),\  \bW^h_s(t),\ \bb^h_s(t),\ \bz^h_s(t)$ are solutions that satisfy the first order optimality condition \eqref{first_order_discrete}. In particular, $\bx_s(t),\   \bz_s(t)$ satisfy \eqref{forward_prop_cont} and \eqref{adjoint_cont} respectively; $\bx^h_s(t),\   \bz^h_s(t)$ satisfy \eqref{forward_prop_fem} and \eqref{adjoint_fem} respectively.  Then,
\begin{equation}
  \tilde{\mathcal{J}}\LRp{\{\bx_s(t_T)\}_{s=1}^S}- \tilde{\mathcal{J}}\LRp{\{\bx^{h}_s(t_T)\}_{s=1}^S}=\sum_{n=1}^{T-1}\LRp{\frac{1}{2} \rho_w^n+\frac{1}{2}\rho_b^n}+R,
    \label{main_result}
\end{equation}
where
\[ \rho_{w}^n=-\sum_{s=1}^S\int_{I_n}\LRp{\bW(t)-\hat{\bW}^h(t),\ \LRs{\frac{\partial{\sigma(\bW^{h}(t)\bx^{h}_s(t)+\bb^{h}(t))}}{\partial \bW^{h}(t)}}^T\bz^{h}_s(t)}\ dt \]
\[ \rho_{b}^n=-\sum_{s=1}^S\int_{I_n}\LRp{\bb(t)-\hat{\bb}^h(t),\ \LRs{\frac{\partial{\sigma(\bW^{h}(t)\bx^{h}_s(t)+\bb^{h}(t))}}{\partial \bb^{h}(t)}}^T\bz^{h}_s(t)}\ dt, \]
 Here $\hat{\bb}^h(t)\in \LRp{\sUb}_h$ and $\hat{\bW}^h(t)\in \LRp{\sUw}_h$
are arbitrary, and  $R$ is the remainder (error) term that arises from using the trapezoidal rule to approximate an integral.
\end{theorem}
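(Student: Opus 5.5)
This is the dual weighted residual argument of Becker--Rannacher / Kraft, applied to the Lagrangians \eqref{lag_true} and \eqref{lag_fem}. Collect the unknowns into $\bu=(\{\bx_s\},\bW,\bb,\{\bz_s\})$ and $\bu^h=(\{\bx^h_s\},\bW^h,\bb^h,\{\bz^h_s\})$, and set $\be=\bu-\bu^h$.

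\textbf{Step 1: pass from the cost to the Lagrangian.} Both $\bx_s$ and $\bx^h_s$ satisfy their state equations exactly, by \eqref{forward_prop_cont} and \eqref{forward_prop_fem}. So the constraint integrals vanish, and $\mathcal{L}(\bu)=\tilde{\mathcal{J}}(\{\bx_s(t_T)\})$ and $\mathcal{L}(\bu^h)=\tilde{\mathcal{J}}(\{\bx^h_s(t_T)\})$. Both are evaluated with the same functional $\mathcal{L}$. This is legitimate because \eqref{lag_fem} is just \eqref{lag_true} restricted to discrete controls. Hence the left side of \eqref{main_result} equals $\mathcal{L}(\bu)-\mathcal{L}(\bu^h)$.

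\textbf{Step 2: trapezoidal identity.} Write the difference as
\[
\mathcal{L}(\bu)-\mathcal{L}(\bu^h)=\int_0^1 \mathcal{L}'(\bu^h+\theta\be,\ \be)\,d\theta .
\]
Apply the trapezoidal rule in $\theta$:
\[
\mathcal{L}(\bu)-\mathcal{L}(\bu^h)=\tfrac12\,\mathcal{L}'(\bu,\be)+\tfrac12\,\mathcal{L}'(\bu^h,\be)+R .
\]
The remainder $R$ is expressed through the third derivative of $\mathcal{L}$ along the segment; its standard form is $\tfrac12\int_0^1\mathcal{L}'''(\bu^h+\theta\be)(\be,\be,\be)\,\theta(\theta-1)\,d\theta$. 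This requires $\sigma$ and $\mathcal{J}$ to be smooth enough, which I will assume. The continuous stationarity \eqref{first_order} holds for every variation, so $\mathcal{L}'(\bu,\be)=0$. What remains is $\tfrac12\mathcal{L}'(\bu^h,\be)+R$.

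\textbf{Step 3: evaluate $\mathcal{L}'(\bu^h,\be)$ component by component.}
\begin{itemize}
\item The $\bz$-component is the state residual of $\bx^h_s$, which vanishes by \eqref{forward_prop_fem}.
\item The $\bx$-component vanishes by the adjoint equation \eqref{adjoint_fem}. I use the integration by parts from the derivation of \eqref{weak_form_x}, with $\be_{\bx_s}(t_1)=0$ because both states share the initial value $\sigma_1(\bW_0\bx_{s;0}+\bb_0)$.
\item The control components survive:
\[
-\sum_s\int_{t_1}^{t_T}\Bigl(\bW-\bW^h,\bigl[\partial_{\bW^h}\sigma(\cdot)\bigr]^T\bz^h_s\Bigr)\,dt ,
\]
and similarly for $\bb$.
\end{itemize}
Now use Galerkin orthogonality. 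The discrete condition \eqref{first_order_discrete}, tested with $\be^h_{\bW}=\hat{\bW}^h-\bW^h\in(\sUw)_h$ (and zero in the other slots), shows that the same integral with $\bW-\bW^h$ replaced by $\hat{\bW}^h-\bW^h$ is zero. Subtracting it lets me replace $\bW^h$ by an arbitrary $\hat{\bW}^h$, and likewise $\bb^h$ by $\hat{\bb}^h$. Splitting $[t_1,t_T]$ into the intervals $I_n$ then gives $\sum_n(\tfrac12\rho_w^n+\tfrac12\rho_b^n)$, which is exactly \eqref{main_result}.

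\textbf{Main obstacle.} The delicate point is Step 2. It relies on $\mathcal{L}'(\bu,\be)=0$ holding for the specific variation $\be=\bu-\bu^h$, which needs $\bu^h$ to be an admissible variation in the continuous spaces. For the controls this is guaranteed by the inclusion $(\sUw)_h\subset\sUw$ (and likewise for $\sUb$). For states and adjoints it follows from the standing $C^1$ assumptions, together with $\be_{\bx_s}(t_1)=0$. I would state these admissibility conditions explicitly, along with the smoothness needed for $R$ to be well defined.
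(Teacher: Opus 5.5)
Your proposal is correct and follows the paper's proof essentially step for step. Both arguments identify the cost gap with $\Lo(\boldsymbol{u})-\Lo(\boldsymbol{u}^h)$, write it via the integral Taylor formula, add and subtract the trapezoidal average (with $\Lo'(\boldsymbol{u},\be)=0$ by continuous stationarity), use Galerkin orthogonality to swap in arbitrary $\hat{\bW}^h,\hat{\bb}^h$, and let \eqref{forward_prop_fem} and \eqref{adjoint_fem} annihilate the state and adjoint components, while your explicit third-derivative form of $R$ and the admissibility remarks on $\be=\boldsymbol{u}-\boldsymbol{u}^h$ make precise what the paper leaves implicit.
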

\begin{proof}
From the definitions of the Lagrangians in \eqref{lag_true} and \eqref{lag_fem}, and noting that both state equations \eqref{forward_prop_cont} and \eqref{forward_prop_fem} are satisfied exactly, we have:
\begin{equation}
\begin{aligned}
   & \tilde{\mathcal{J}}\LRp{\{\bx_s(t_T)\}_{s=1}^S}- \tilde{\mathcal{J}}\LRp{\{\bx^{h}_s(t_T)\}_{s=1}^S}=\\
    &\Lo(\{\bx_s(t)\}_{s=1}^S,\  \bW(t),\ \bb(t),\ \{\bz_s(t)\}_{s=1}^S)-\Lo(\{\bx^{h}_s(t)\}_{s=1}^S,\  \bW^{h}(t),\ \bb^{h}(t),\ \{\bz^{h}_s(t)\}_{s=1}^S).
    \end{aligned}
    \label{beg}
\end{equation}
Applying the integral form of the Taylor expansion to express the Lagrangian at the true solution in terms of the discrete solution we have:
\begin{equation}
\begin{aligned}
    &\Lo(\{\bx_s(t)\}_{s=1}^S,\  \bW(t),\ \bb(t),\ \{\bz_s(t)\}_{s=1}^S)=\Lo(\{\bx^{h}_s(t)\}_{s=1}^S,\  \bW^{h}(t),\ \bb^{h}(t),\ \{\bz^{h}_s(t)\}_{s=1}^S)\\
    & +\int_0^1 {\Lo}'\LRp{\{\bx^{h}_s(t)\}_{s=1}^S+s\{\be_{\bx_s}\}_{s=1}^S,\ \bW^{h}(t)+s\be_{\bW},\ \bb^{h}(t)+s\be_{\bb},\ \{\bz^{h}_s(t)\}_{s=1}^S+s\{\be_{\bz_s}\}_{s=1}^S,\ \be}\ ds,
    \end{aligned}
    \label{fund}
\end{equation}
where
\[ {\Lo}'\LRp{\{\bx^{h}_s(t)\}_{s=1}^S+s\{\be_{\bx_s}\}_{s=1}^S,\ \bW^{h}(t)+s\be_{\bW},\ \bb^{h}(t)+s\be_{\bb},\ \{\bz^{h}_s(t)\}_{s=1}^S+s\{\be_{\bz_s}\}_{s=1}^S,\ \be}= \]
\[  
\nabla \Lo\Big(
\{\bx^{h}_s(t)\}_{s=1}^S + s\{\be_{\bx_s}\}_{s=1}^S,\ 
\bW^{h}(t) + s\be_{\bW},\ 
\bb^{h}(t) + s\be_{\bb},\ 
\{\bz^{h}_s(t)\}_{s=1}^S + s\{\be_{\bz_s}\}_{s=1}^S
\Big)^{\top}
\begin{bmatrix}
\{\be_{\bx_s}\}_{s=1}^S \\
\be_{\bW} \\
\be_{\bb} \\
\{\be_{\bz_s}\}_{s=1}^S,
\end{bmatrix}
\]
and $\be=\LRp{\{\be_{\bx_s}\}_{s=1}^S,\ \be_{\bW},\ \be_{\bb},\ \{\be_{\bz_s}\}_{s=1}^S}$, with $\be_{\bx_s}=\bx_s-\bx^h_s$, $\be_{\bW}=\bW-\bW^h,$ $\be_{\bb}=\bb-\bb^h,$ $\be_{\bz_s}=\bz_s-\bz^h_s$ for each $s\in\{1,\ldots,S\}$.  Adding and subtracting the average of the integrand values at $s=0$ and $s=1$ in \eqref{fund} we have:
\begin{equation}
    \begin{aligned}
      & \Lo(\{\bx_s(t)\}_{s=1}^S,\  \bW(t),\ \bb(t),\ \{\bz_s(t)\}_{s=1}^S)=\Lo(\{\bx^{h}_s(t)\}_{s=1}^S,\  \bW^{h}(t),\ \bb^{h}(t),\ \{\bz^{h}_s(t)\}_{s=1}^S)\\
      & +\int_0^1 {\Lo}'\LRp{\{\bx^{h}_s(t)\}_{s=1}^S+s\{\be_{\bx_s}\}_{s=1}^S,\ \bW^{h}(t)+s\be_{\bW},\ \bb^{h}(t)+s\be_{\bb},\ \{\bz^{h}_s(t)\}_{s=1}^S+s\{\be_{\bz_s}\}_{s=1}^S,\ \be}\ ds\\
      & +\frac{1}{2}  {\Lo}'\LRp{\{\bx^{h}_s(t)\}_{s=1}^S,\ \bW^{h}(t),\ \bb^{h}(t),\ \{\bz^{h}_s(t)\}_{s=1}^S,\ \be}\\
      & -\frac{1}{2}   {\Lo}'\LRp{\{\bx^{h}_s(t)\}_{s=1}^S,\ \bW^{h}(t),\ \bb^{h}(t),\ \{\bz^{h}_s(t)\}_{s=1}^S,\ \be}-\frac{1}{2}  {\Lo}'\LRp{\{\bx_s(t)\}_{s=1}^S,\ \bW(t),\ \bb(t),\ \{\bz_s(t)\}_{s=1}^S,\ \be}
    \end{aligned}
    \label{tra_o}
\end{equation}
where the last term is zero due to first order optimality condition \eqref{first_order}. Note that last two terms in \eqref{tra_o} approximate integral in \eqref{tra_o} by the trapezoidal rule. Therefore we have:
\begin{equation}
    \begin{aligned}
         \Lo(\{\bx_s(t)\}_{s=1}^S,\  \bW(t),\ \bb(t),\ \{\bz_s(t)\}_{s=1}^S)&=\Lo(\{\bx^{h}_s(t)\}_{s=1}^S,\  \bW^{h}(t),\ \bb^{h}(t),\ \{\bz^{h}_s(t)\}_{s=1}^S)\\
         &+\frac{1}{2}  {\Lo}'\LRp{\{\bx^{h}_s(t)\}_{s=1}^S,\ \bW^{h}(t),\ \bb^{h}(t),\ \{\bz^{h}_s(t)\}_{s=1}^S,\ \be}+R,
    \end{aligned}
    \label{use}
\end{equation}
where $R$ is the reminder which is the difference between the integral and the trapezoidal approximation. Using \eqref{use} in \eqref{beg} we have:
\begin{equation}
\begin{aligned}
&\tilde{\mathcal{J}}\LRp{\{\bx_s(t_T)\}_{s=1}^S}- \tilde{\mathcal{J}}\LRp{\{\bx^{h}_s(t_T)\}_{s=1}^S}= \frac{1}{2}  {\Lo}'\LRp{\{\bx^{h}_s(t)\}_{s=1}^S,\ \bW^{h}(t),\ \bb^{h}(t),\ \{\bz^{h}_s(t)\}_{s=1}^S,\ \be} +R\\
=&\frac{1}{2}  {\Lo}'\LRp{\{\bx^{h}_s(t)\}_{s=1}^S,\ \bW^{h}(t),\ \bb^{h}(t),\ \{\bz^{h}_s(t)\}_{s=1}^S,\ \{\bx_s-\bx^h_s\}_{s=1}^S,\ \bW-\bW^h,\ \dots} +R.
\label{one}
\end{aligned}
\end{equation}   
Due to the first order optimality condition \eqref{first_order_discrete} for the discrete problem, we have ${\Lo}'\LRp{\{\bx^{h}_s(t)\}_{s=1}^S,\ \bW^{h}(t),\ \bb^{h}(t),\ \{\bz^{h}_s(t)\}_{s=1}^S,\ \be^h}=0$ for all variations $\be^h$ in the finite element space. This Galerkin orthogonality property allows us to replace $\bW^h$ and $\bb^h$ with arbitrary functions $\hat{\bW}^h\in \LRp{\sUw}_h$ and $\hat{\bb}^h\in \LRp{\sUb}_h$. Therefore, we have:
\begin{equation}
\begin{aligned}
&\tilde{\mathcal{J}}\LRp{\{\bx_s(t_T)\}_{s=1}^S}- \tilde{\mathcal{J}}\LRp{\{\bx^{h}_s(t_T)\}_{s=1}^S}=\\
& \frac{1}{2}  {\Lo}'\LRp{\{\bx^{h}_s(t)\}_{s=1}^S,\ \bW^{h}(t),\ \bb^{h}(t),\ \{\bz^{h}_s(t)\}_{s=1}^S,\ \{\bx_s-\bx^h_s\}_{s=1}^S,\ \bW-\hat{\bW}^h,\ \bb-\hat{\bb}^h,\  \{\bz_s-\bz^h_s\}_{s=1}^S} +R.
\label{galerkin_step}
\end{aligned}
\end{equation}     
This replacement is valid because:
\[\bW-\bW^h=\LRp{\bW-\hat{\bW}^h}+\LRp{\hat{\bW}^h-\bW^h},\]
and the term involving $\LRp{\hat{\bW}^h-\bW^h}$ vanishes due to Galerkin orthogonality (since $\hat{\bW}^h-\bW^h\in \LRp{\sUw}_h$). The same argument applies to the bias terms.
Now evaluating different terms in \eqref{galerkin_step} based on the definition \eqref{lag_fem} we have:
\[\tilde{\mathcal{J}}\LRp{\{\bx_s(t_T)\}_{s=1}^S}- \tilde{\mathcal{J}}\LRp{\{\bx^{h}_s(t_T)\}_{s=1}^S}=\sum_{n=1}^{T-1}\LRp{\frac{1}{2} \rho_w^n+\frac{1}{2}\rho_b^n}+R, \]
where
\[ \rho_{w}^n=-\sum_{s=1}^S\int_{I_n}\LRp{\bW(t)-\hat{\bW}^h(t),\ \LRs{\frac{\partial{\sigma(\bW^{h}(t)\bx^{h}_s(t)+\bb^{h}(t))}}{\partial \bW^{h}(t)}}^T\bz^{h}_s(t)}\ dt \]
\[ \rho_{b}^n=-\sum_{s=1}^S\int_{I_n}\LRp{\bb(t)-\hat{\bb}^h(t),\ \LRs{\frac{\partial{\sigma(\bW^{h}(t)\bx^{h}_s(t)+\bb^{h}(t))}}{\partial \bb^{h}(t)}}^T\bz^{h}_s(t)}\ dt, \]
 and the other terms vanish due to \eqref{forward_prop_fem}, and \eqref{adjoint_fem}.
 This concludes the proof.

 \end{proof}

\begin{remark}
\label{intention}
 The choice of strong form formulation for the state equation \eqref{forward_prop_fem} and adjoint equation \eqref{adjoint_fem} eliminates two additional residual terms that would otherwise appear in the error estimate under a finite element discretization. This simplification is intentional: by avoiding finite element discretization at the theoretical level, we obtain a cleaner error characterization while retaining the flexibility to employ efficient explicit schemes (such as forward Euler) in the practical implementation. The discretization parameter K (see section \ref{discretize_state}) controls the accuracy of the numerical approximation of states and adjoints, and can be increased to reduce discretization error as needed.
\end{remark}
\begin{corollary}
\label{cor_one}
Consider the assumptions and the result \eqref{main_result} in Theorem \ref{theo_one}.  We have:
\begin{equation}
    \snor{\tilde{\mathcal{J}}\LRp{\{\bx_s(t_T)\}_{s=1}^S}- \tilde{\mathcal{J}}\LRp{\{\bx^{h}_s(t_T)\}_{s=1}^S}}\leq  \sum_{n=1}^{T-1}  \underbrace{\frac{1}{2}\LRp{\omega_n^wR_n^w+\omega_n^bR_n^b}}_{\sE_n}+\snor{R},
    \label{error_est}
\end{equation}
where
\begin{align}
\omega_n^w&=h_n \norm{\bW(t)-\hat{\bW}^h(t)}_{I_n},\quad \omega_n^b=h_n \norm{\bb(t)-\hat{\bb}^h(t)}_{I_n}, \label{omega_def}\\
R_n^w&=\sum_{s=1}^S\norm{\LRs{\frac{\partial{\sigma(\bW^{h}(t)\bx_s^{h}(t)+\bb^{h}(t))}}{\partial \bW^{h}(t)}}^T\bz_s^{h}(t)}_{I_n},R_n^b=\sum_{s=1}^S\norm{\LRs{\frac{\partial{\sigma(\bW^{h}(t)\bx^h_s(t)+\bb^{h}(t))}}{\partial \bb^{h}(t)}}^T\bz^h_s(t)}_{I_n} \label{R_def}
\end{align}
Here the notation $\nor{f(t)}_{I_n}$ denotes the supremum norm on the interval $I_n$:
\begin{equation}
\nor{f(t)}_{I_n}=\sup_{t\in I_n} \nor{f(t)}_2.
\label{sup_norm}
\end{equation}
\end{corollary}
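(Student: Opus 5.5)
The plan is to start from the identity \eqref{main_result} of Theorem \ref{theo_one}, take absolute values, and apply the triangle inequality:
\[
\snor{\tilde{\mathcal{J}}\LRp{\{\bx_s(t_T)\}_{s=1}^S}- \tilde{\mathcal{J}}\LRp{\{\bx^{h}_s(t_T)\}_{s=1}^S}}\leq \sum_{n=1}^{T-1}\frac{1}{2}\LRp{\snor{\rho_w^n}+\snor{\rho_b^n}}+\snor{R}.
\]
This reduces the corollary to two separate per-interval estimates,
\[
\snor{\rho_w^n}\leq \omega_n^w R_n^w, \qquad \snor{\rho_b^n}\leq \omega_n^b R_n^b .
\]
The same comparison functions $\hat{\bW}^h$, $\hat{\bb}^h$ chosen in the theorem are used throughout. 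Since they are arbitrary, the estimate holds for every choice, and one may later take, e.g., the nodal interpolants of $\bW$ and $\bb$.

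For $\rho_w^n$, move the absolute value inside the sum over $s$ and the integral over $I_n$. Then bound each integrand by Cauchy--Schwarz in the Euclidean (Frobenius) inner product:
\[
\snor{\LRp{\bW(t)-\hat{\bW}^h(t),\ \bg_s(t)}}\leq \nor{\bW(t)-\hat{\bW}^h(t)}_2\,\nor{\bg_s(t)}_2,
\]
where $\bg_s(t)=\LRs{\partial\sigma(\bW^h\bx^h_s+\bb^h)/\partial\bW^h}^T\bz^h_s(t)$. Next, each factor is replaced by its supremum over $I_n$ as defined in \eqref{sup_norm}. The integral over $I_n$ then contributes the length $h_n$, so
\[
\snor{\rho_w^n}\leq \sum_{s=1}^S h_n \nor{\bW-\hat{\bW}^h}_{I_n}\nor{\bg_s}_{I_n}.
\]
Pulling the $s$-independent factor $h_n\nor{\bW-\hat{\bW}^h}_{I_n}=\omega_n^w$ out of the sum leaves exactly $R_n^w$ from \eqref{R_def}. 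The bias term is handled identically and gives $\omega_n^b R_n^b$. Summing over $n$ and collecting the factor $\tfrac12$ yields \eqref{error_est}, with $\sE_n$ as displayed.

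The only step needing care is that the suprema are finite. By \eqref{true_space}, $\bW$ and $\bb$ are continuous on $[t_1,t_T]$, and $\hat{\bW}^h$, $\hat{\bb}^h$ are piecewise linear, hence bounded on the closed interval. Likewise $\bx^h_s$ and $\bz^h_s$ are continuously differentiable on $[t_1,t_T]$. Assuming $\sigma$ is $C^1$, the Jacobian-weighted adjoint $\bg_s$ is continuous on $\overline{I_n}$. All suprema are therefore finite, and taking the supremum over the open interval $I_n$ rather than its closure causes no loss. I expect no genuine obstacle. The remainder $R$ is not estimated further and is simply carried along as $\snor{R}$.
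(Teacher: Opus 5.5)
Your proposal is correct and follows the paper's proof essentially step for step: the triangle inequality on \eqref{main_result}, Cauchy--Schwarz on each integrand, replacing each factor by its supremum over $I_n$ to extract the length $h_n$, and the identical treatment of the bias term. Your extra remarks are refinements the paper leaves implicit: that the suprema are finite, and that $\nor{\cdot}_2$ must be read as the Frobenius norm in the weight term for Cauchy--Schwarz to apply.
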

\begin{proof}
We derive the upper bound by applying the triangle inequality and Cauchy-Schwarz inequality to the error representation in Theorem \ref{theo_one}.
From the result \eqref{main_result} in Theorem \ref{theo_one}, taking absolute values on both sides and applying the triangle inequality we have:
\begin{equation}
 \snor{\tilde{\mathcal{J}}\LRp{\{\bx_s(t_T)\}_{s=1}^S}- \tilde{\mathcal{J}}\LRp{\{\bx^{h}_s(t_T)\}_{s=1}^S}}\leq \frac{1}{2}\sum_{n=1}^{T-1}\LRp{\snor{\rho_w^n}+\snor{\rho_b^n}} +\snor{R}.
 \label{triangle_step}
\end{equation}
Consider the term $\rho_w^n$  in Theorem \ref{theo_one}:
\begin{equation}
\rho_{w}^n=-\sum_{s=1}^S\int_{I_n}\LRp{\bW(t)-\hat{\bW}^h(t),\ \LRs{\frac{\partial{\sigma(\bW^{h}(t)\bx^{h}_s(t)+\bb^{h}(t))}}{\partial \bW^{h}(t)}}^T\bz^{h}_s(t)}\ dt.
\label{rho_w_integral}
\end{equation}
Taking the absolute value and applying the Cauchy-Schwarz inequality for the inner product, we have:
\begin{equation}
\snor{\rho_w^n}\leq \sum_{s=1}^S\int_{I_n}\norm{\bW(t)-\hat{\bW}^h(t)}_2\norm{\LRs{\frac{\partial{\sigma(\bW^{h}(t)\bx^{h}_s(t)+\bb^{h}(t))}}{\partial \bW^{h}(t)}}^T\bz^{h}_s(t)}_2\ dt.
\label{cs_applied_w}
\end{equation}
Since the integrand is non-negative, we can bound it by replacing the functions with their supremum values over the interval $I_n$:
\begin{equation}
\begin{aligned}
\snor{\rho_w^n}&\leq \sum_{s=1}^S\int_{I_n}\sup_{t\in I_n}\norm{\bW(t)-\hat{\bW}^h(t)}_2 \cdot \sup_{t\in I_n}\norm{\LRs{\frac{\partial{\sigma(\bW^{h}(t)\bx_s^{h}(t)+\bb^{h}(t))}}{\partial \bW^{h}(t)}}^T\bz_s^{h}(t)}_2\ dt\\
&= h_n \norm{\bW(t)-\hat{\bW}^h(t)}_{I_n} \cdot \LRp{\sum_{s=1}^S\norm{\LRs{\frac{\partial{\sigma(\bW^{h}(t)\bx_s^{h}(t)+\bb^{h}(t))}}{\partial \bW^{h}(t)}}^T\bz_s^{h}(t)}_{I_n}}=\omega_n^wR_n^w
\end{aligned}
\label{supremum_bound_w}
\end{equation}
Following similar procedure for $\rho_b^n$ in Theorem \ref{theo_one}, we have:
\begin{equation}
\snor{\rho_b^n}\leq  h_n \norm{\bb(t)-\hat{\bb}^h(t)}_{I_n} \LRp{\sum_{s=1}^S\norm{\LRs{\frac{\partial{\sigma(\bW^{h}(t)\bx^h_s(t)+\bb^{h}(t))}}{\partial \bb^{h}(t)}}^T\bz^h_s(t)}_{I_n}}=\omega_n^bR_n^b.
\label{omega_R_product_b}
\end{equation}
Substituting the bounds \eqref{supremum_bound_w} and \eqref{omega_R_product_b} into \eqref{triangle_step}:
\begin{equation}
 \snor{\tilde{\mathcal{J}}\LRp{\{\bx_s(t_T)\}_{s=1}^S}- \tilde{\mathcal{J}}\LRp{\{\bx^{h}_s(t_T)\}_{s=1}^S}}\leq \frac{1}{2}\sum_{n=1}^{T-1}\LRp{\omega_n^wR_n^w+\omega_n^bR_n^b}+\snor{R},
  \label{final_bound}
\end{equation}
which is the desired result \eqref{error_est}, thereby concluding the proof.
\end{proof}

\subsection{Approximating $\omega_n^w$ and $\omega_n^b$ using interpolation estimates}
\label{sec:omega_approx}

To compute the error bound in \eqref{error_est}, we require estimates of the terms $\omega_n^w$ and $\omega_n^b$, which depend on the true (but unavailable) solutions $\bW(t)$ and $\bb(t)$. Since the true solution is unknown, we employ standard interpolation error estimates to approximate these quantities.
By classical finite element interpolation theory, there exist $\bar{\bW}^h(t)\in \LRp{\sUw}_h$ and $\bar{\bb}^h(t)\in \LRp{\sUb}_h$ such that:
\begin{equation}
  \norm{\bW(t)-\bar{\bW}^h(t)}_{I_n}\leq \frac{h_n^2}{8}\norm{\ddot{\bW}(t)}_{I_n},\quad \norm{\bb(t)-\bar{\bb}^h(t)}_{I_n}\leq \frac{h_n^2}{8}\norm{\ddot{\bb}(t)}_{I_n},
  \label{interp_est}
\end{equation}
where the second derivatives $\ddot{\bW}(t)$ and $\ddot{\bb}(t)$ are approximated in practice using centered difference quotients of the discrete weight and bias values \cite{kraft2010dual}. These approximations provide practical estimates for $\omega_n^w$ and $\omega_n^b$ that can be computed from the available discrete solution.

\subsection{Computing $R_n^w$ and $R_n^b$ in Corollary \ref{cor_one}}
\label{discretize_state}
Note that the terms $R_n^w$ and $R_n^b$ in Corollary \ref{cor_one} involve the state variables $\bx^h(t)$ and adjoint variables $\bz^h(t)$, which theoretically satisfy the ODEs \eqref{forward_prop_fem} and \eqref{adjoint_fem}.
While the theoretical error estimate in Corollary \ref{cor_one} assumes that the state equation \eqref{forward_prop_fem} and adjoint equation \eqref{adjoint_fem} are satisfied exactly, in practice these ODEs must be discretized numerically. We employ a forward Euler scheme with a fine sub-discretization (controlled by a parameter $K$) to control the discretization error (see Figure \ref{fig:discretization}). We assume that $K$ is chosen such that the state/adjoint discretization error is very small and the error estimate in Corollary \ref{cor_one} solely focuses on the parameter representation error (see remark \ref{motiv_remark}).

\subsubsection{Discretization strategy}
\label{discrete_sect}
Consider the mesh $t_1<t_2<\cdots<t_T$ defining the network layers, with step sizes $h_{k} =t_{k+1}-t_{k}$ and intervals $I_{k}=(t_{k},t_{k+1})$ for $k=1,\dots, T-1$. To accurately solve the ODEs, we introduce a sub-discretization parameter $K\in \mathbb{N}$, which subdivides each interval $I_k$ into $K$ smaller steps of size $h_k/K$. For each interval $I_k = (t_k, t_{k+1})$, we subdivide it into $K$ equal subintervals with step size $\Delta t = h_k/K$. Define the substep time points:
\begin{equation}
t_k^{(r)} = t_k + \frac{r h_k}{K}, \quad r=0,1,\ldots,K.
\label{substep_times}
\end{equation}
\subsubsection{Proposed neural network architecture}
\label{our_archi}
Let us denote  $\tilde{\bW}_0,\ \tilde{\bb}_0,\ \tilde{\bW}^h(t),\ \tilde{\bb}^h(t), \ \tilde{\bW}_T,\ \tilde{\bb}_T$ as the weights and biases of our proposed architecture, $\tilde{\bx}_s^h(t)$ denotes the state and $\tilde{\bz}_s^h(t)$ denotes the corresponding adjoint variable. Then, the forward propagation of our architecture can be written as follows:
\paragraph{Forward propagation for our proposed neural network architecture:}
\begin{equation}
\mathrm{\bold{Input \ layer:}} \ \quad \tilde{\bx}_s^h(t_1)=\sigma_1(\tilde{\bW}_0\bx_{s;0}+\tilde{\bb}_0),
\label{input_l}
\end{equation}
 The forward Euler scheme for the {\bf{hidden layer}} ODE \eqref{forward_prop_fem}  with a fine discretization for the states  is  written as:
\begin{equation}
\tilde{\bx}^h_s\LRp{t_k^{(r)}}=\tilde{\bx}^h_s\LRp{t_k^{(r-1)}} +\LRp{\frac{h_k}{K}}\sigma\LRp{\tilde{\bW}^h\LRp{t_k^{(r-1)}}\tilde{\bx}^h_s\LRp{t_k^{(r-1)}}+\tilde{\bb}^h\LRp{t_k^{(r-1)}} },
\label{discrete}
\end{equation}
for $r=1,\dots, K$ and $k=1,\dots, T-1$. 
The weights and biases $\tilde{\bW}^h(t)$, $\tilde{\bb}^h(t)$ are evaluated at each substep via linear interpolation:
\begin{equation}
\begin{aligned}
 &\tilde{\bW}^h(t)=\LRp{1-\frac{t-t_{k}}{h_{k}}}\tilde{\bW}^h(t_{k})+\LRp{\frac{t-t_{k}}{h_{k}}}\tilde{\bW}^h(t_{k+1}),\quad t_{k}\leq t\leq t_{k+1},\\
 & \tilde{\bb}^h(t)=\LRp{1-\frac{t-t_{k}}{h_{k}}}\tilde{\bb}^h(t_{k})+\LRp{\frac{t-t_{k}}{h_{k}}}\tilde{\bb}^h(t_{k+1}), \quad t_{k}\leq t\leq t_{k+1}.
    \end{aligned}
    \label{rep_new}
\end{equation}
\begin{equation}
    \mathrm{\bold{Output \ layer:}} \ \quad \tilde{\bx}_s^h(t_{T+1})=\sigma_{T+1}(\tilde{\bW}_T\tilde{\bx}_{s}(t_T)+\tilde{\bb}_T),
    \label{output_l}
\end{equation}
Figure \ref{fig:discretization} illustrates the two-level discretization scheme: the coarse mesh $\{t_1,\ldots,t_T\}$ defines the network layers where parameters are stored, while the fine sub-mesh (controlled by parameter $K$) is used to accurately propagate states and adjoints between layers.
\paragraph{Adjoint propagation discretization.}
Similarly, the adjoint equation \eqref{adjoint_fem} is discretized backward in time using an analogous backward Euler scheme to  estimate $\tilde{\bz}^h_s\LRp{t_k^{(r)}}$.

\begin{figure}[h!]
\centering
\begin{tikzpicture}[scale=1.0]
    \definecolor{layercolor}{RGB}{70,130,180}
    \definecolor{statecolor}{RGB}{220,20,60}
    \definecolor{paramcolor}{RGB}{34,139,34}
    
    \draw[thick,->] (0,0) -- (11,0) node[right] {Network depth $t$};
    
    \foreach \x/\label in {1/t_1, 3/t_2, 5/t_3, 7/t_4, 9/t_T} {
        \draw[thick] (\x,0.1) -- (\x,-0.1);
        \node[below] at (\x,-0.2) {$\label$};
        \fill[layercolor] (\x,0) circle (2.5pt);
    }
    
    \draw[decorate,decoration={brace,amplitude=8pt,mirror}] 
        (1,-0.7) -- (3,-0.7) node[midway,below=8pt] {$I_1$};
    \draw[decorate,decoration={brace,amplitude=8pt,mirror}] 
        (3,-0.7) -- (5,-0.7) node[midway,below=8pt] {$I_2$};
    \draw[decorate,decoration={brace,amplitude=8pt,mirror}] 
        (5,-0.7) -- (7,-0.7) node[midway,below=8pt] {$I_3$};
    
    \node[above,paramcolor,font=\small] at (4,2.5) {Parameter discretization (coarse)};

    \foreach \x in {3.0, 3.4, 3.8, 4.2, 4.6, 5.0} {
        \fill[statecolor] (\x,0) circle (1.2pt);
    }
    
    \draw[decorate,decoration={brace,amplitude=5pt},statecolor] 
        (3,0.5) -- (5,0.5) node[midway,above=5pt,font=\small] {$K$ subdivisions for state/adjoint};
    

    \node[draw,fill=white,inner sep=5pt,align=left] at (10.5,1.8) {
        \textcolor{layercolor}{$\bullet$} Layer nodes $(t_k)$ \\[2pt]
        \textcolor{statecolor}{$\bullet$} State points \\[2pt]
    };
    
    \begin{scope}[yshift=3.5cm]
        \draw[->] (0,0) -- (10,0) node[right,font=\small] {$t$};
        \draw[->] (0,0) -- (0,2) node[above,font=\small] {$\tilde{\bW}(t)$};
        
        \draw[thick,paramcolor] (1,0.5) -- (3,1.2) -- (5,1.6) -- (7,1.3) -- (9,0.8)
            node[right,font=\scriptsize,black] {$\tilde{\bW}^h(t)$};
        
        \foreach \x/\y in {1/0.5, 3/1.2, 5/1.6, 7/1.3, 9/0.8} {
            \fill[layercolor] (\x,\y) circle (2pt);
        }
        
        \node[font=\scriptsize] at (5,2.2) {Parameters are piecewise linear between layers};
    \end{scope}
\end{tikzpicture}
\caption{Two-level discretization scheme. \textbf{Coarse mesh (bottom):}  Nodes $\{t_1,\ldots,t_T\}$ (blue circles) define intervals $I_k$. Parameters $\tilde{\bW}^h(t_k), \tilde{\bb}^h(t_k)$ are stored at these nodes. \textbf{Fine mesh (bottom):} Within each interval, $K$ forward Euler substeps (red points) are used to propagate states and adjoints with step size $\Delta t = h_k/K$. \textbf{Linear interpolation (top):} The piecewise linear function $\tilde{\bW}^h(t)$ (green) shows how parameters are interpolated between  nodes for evaluation at substep points.}
\label{fig:discretization}
\end{figure}

\subsubsection{Computational error estimate}
\label{procedure_n}

In our work, we train a neural network of the form \eqref{discrete} to optimize for the weights $\tilde{\bW}^h(t_1),\dots \tilde{\bW}^h(t_T)$ and biases $\tilde{\bb}^h(t_1),\dots \tilde{\bb}^h(t_T)$.
Let $\tilde{\bx}^h_s(t)$ and $\tilde{\bz}^h_s(t)$ denote the piecewise linear interpolants of the discrete solutions obtained from \eqref{discrete}. Inorder to obtain  a practical estimate of the error bound in \eqref{error_est}, we substitute the computed approximations in \eqref{error_est}:
\[ \bx^h_s(t)\approx\tilde{\bx}^h_s(t),\quad \bz^h_s(t)\approx\tilde{\bz}^h_s(t),\quad \bW^h(t)=\tilde{\bW}^h(t),\quad \bb^h(t)=\tilde{\bb}^h(t). \]
The upper bound in \eqref{interp_est} is used to compute $\omega_n^w$ and $\omega_n^b$ (see section \ref{sec:omega_approx}).
\begin{algorithm}[h!] 
	\caption{A-posteriori error estimation and architecture  adaptation}
	\hspace*{\algorithmicindent} \textbf{Input}: Training data $\bold{X}$, labels $\bold{C}$, validation data $\bold{X}_1$, validation labels $\bold{C}_1$, number of neurons in each hidden layer $n_1$, loss function $\tilde{\mathcal{J}}$, number of iterations $N_n$, hyperparameter $K$, hyperparameter $T$, hyperparameters $t_1,\ t_T$,  hyperparameters for optimizer (see Appendix \ref{hyper_parameter_n}).\\
	\hspace*{\algorithmicindent} \textbf{Initialize}:   Initialize  network $\mathcal{Q}_1$ with $T$ hidden layers and nodes at $\{t_1,\ldots,t_T\}$, with hidden layer propagation given by \eqref{discrete}.\\
	\begin{algorithmic}[1] 
  \State Train network $\mathcal{Q}_1$ and freeze the parameters ($\tilde{\bW}_0^h,\ \tilde{\bb}_0^h,\ \tilde{\bW}_T^h,\ \tilde{\bb}_T^h$) of input layer \eqref{input_l} and output layer \eqref{output_l}. 
  \State Store the best validation loss $(\epsilon_v)^{1}$ and the best performing network $\mathcal{Q}_1$.
  		\State set $i =1$,  $(\epsilon_v)^{0}>>(\epsilon_v)^{1}$
		\While{$i \le N_n$ \textbf{and} $\LRs{(\epsilon_v)^{i}\leq (\epsilon_v)^{i-1}}$ } 
 \State Compute the a-posteriori error $\sE_n$ in \eqref{error_est} for each $n=1,\dots T-1$  based on the procedure in section \eqref{procedure_n}. 
 \State Compute $n^*=\arg \max_n\{ \sE_n\}$,
        \State Obtain the new network $\mathcal{Q}_{i+1}$ by adding a new layer/node at time  $t=\frac{t_{n^*}+t_{n^*+1}}{2}$ with weight initialized as  $\frac{\tilde{\bW}^h(t_{n^*})+\tilde{\bW}^h(t_{n^*+1})}{2}$, and bias initialized as  $\frac{\tilde{\bb}^h(t_{n^*})+\tilde{\bb}^h(t_{n^*+1})}{2}$.
        \State Set $T=T+1$ and form new nodes $\{t_1,\dots t_T\}$ after insering the new node.
        \State Train network $\mathcal{Q}_{i+1}$  and store the best validation loss $(\epsilon_v)^{i+1}$ and the best network $\mathcal{Q}_{i+1}$.
		\State $i = i+1$
		\EndWhile
        \State Defreeze parameters ($\tilde{\bW}_0^h,\ \tilde{\bb}_0^h,\ \tilde{\bW}_T^h,\ \tilde{\bb}_T^h$) of input layer and output layer and train the network $\mathcal{Q}_{i-1}$ to achieve further improvement on validation loss.
	\end{algorithmic} \label{Algo_full}
\hspace*{\algorithmicindent} \textbf{Output}: Network $\mathcal{Q}_{i-1}$
\end{algorithm}

Algorithm \ref{Algo_full} shows our proposed architecture adaptation algorithm. The procedure starts with training a network $\mathcal{Q}_1$ with $T$ hidden layers and nodes at $\{t_1,\ldots,t_T\}$. The network $\mathcal{Q}_1$ has a forward propagation given by \eqref{input_l}, \eqref{discrete}, and \eqref{output_l}. Once the network $\mathcal{Q}_1$ is trained, we freeze (make it untrainable/fixed) the parameters of the input and output layer and focus our attention on adapting the hidden layers to further improve the performance. To that end, we compute the error $\{\sE_n\}_{n=1}^{T-1}$ in \eqref{error_est}  based on the procedure in section \eqref{procedure_n}. A new layer is inserted at the location of the maximum error.
The weights and biases of the new layer are initialized as discussed in line 7 of Algorithm \ref{Algo_full}. The new network is trained again and the procedure is repeated until no more improvement in validation loss is achieved or the maximum depth is attained (line 4). Once the algorithm terminates, as a postprocessing step, we defreeze (make it trainable) the parameters of the input and output layer and retrain the network to achieve further improvement on validation loss. 
\begin{remark}
Note that a key assumption in Theorem \ref{theo_one} is that the weights and biases of the network must be at a local minimum (satisfying the first-order optimality condition). However, in Algorithm \ref{Algo_full} we use a validation dataset to retain the best performing network (see lines 2 and 9 of Algorithm \ref{Algo_full}), which means that the network is not necessarily at a local minimum with respect to the training dataset at any given iteration of the algorithm. If the parameters are very far from a local minimum, this would impact the accuracy of the error estimated using Corollary \ref{cor_one}.
\end{remark}
\section{Numerical demonstration}
\label{num_d}
In this section, we numerically demonstrate the proposed approach on several datasets, with a primary focus on regression tasks.  General experimental settings for all the problems and descriptions of methods adopted for
comparison are detailed in Appendix \ref{hyper_parameter}. Note that in our numerical results, ``proposed approach" refers to Algorithm \ref{Algo_full}. Details of hyperparameter values used for different problems is provided in Appendix \ref{hyper_parameter_n}.  The loss function $\mathcal{J}$ in \eqref{stat_final_true} is defined as the mean-squared error between the network output and the corresponding target data (label).

\subsection{Proof of concept example}
\label{proof_of}
As a proof of concept of our proposed approach, we first consider the problem of learning a 2-dimensional function. The goal is to learn a nonlinear function $f(x,\ y)=e^{-0.1\LRp{x^2+y^2}}\times \sin{x}\times \cos{y}$ on the domain $[-5,\ 5]\times [-5,\ 5]$ as accurately as possible. The contours of this function is shown in subfigure (f) of Figure \ref{evolution_toy}. We generated $1000$ data points from a  uniform distribution over the domain for training and considered $500$ data points for the testing and validation respectively. In this case we have the input dimension $n_0=2$ and output dimension for the network $n_{T+1}=1$.
Our adaptation  procedure starts by training a three hidden layer network with just $5$ neurons in each hidden layer and we progressively increase the depth based on Algorithm \ref{Algo_full}.
\subsubsection{Estimating discretization parameter $K$}
One of the key hyperparameters used in Algorithm \ref{Algo_full} is the parameter $K$. Note that the parameter $K$ is a discretization parameter  chosen to ensure that the state and adjoint discretization error are small (see section \ref{discrete_sect}).  We conduct a mesh convergence study to determine an appropriate value for $K$. To that end, let us first define an error indicator as follows:
\[ I_{\epsilon}=\frac{1}{S\times T}\sum_{i=1}^{T}\sum_{s=1}^S \frac{\norm{\tilde{\bx}_s^h(t_i)- \tilde{\bx}_s(t_i)}^2}{\norm{\tilde{\bx}_s(t_i)}^2},\]
where $\tilde{\bx}_s^h(t_i)$  is the solution  obtained after training network in section \ref{our_archi} for a given discretization parameter $K$. 
 $\tilde{\bx}_s(t_i)$ denotes an estimate of the true solution obtained after training network in section \ref{our_archi} for a choice of large $K$ (in this case $K=1000$).

We conduct the mesh convergence study for the initial network  with $T=3$ (three hidden layers). Figure \ref{fig:mesh_study} (left subfigure) shows how the average relative error $I_{\epsilon}$ decreases as $K$ increases. We choose $K=4$ such that the relative error $I_{\epsilon}$ is $0.03$. This choice is sufficient to guarantee low error in the state/adjoint discretization for the initial coarse network, leading to accurate computational estimates (upper bounds) for the a-posteriori error in \eqref{error_est} (see Remark \ref{discretize_state} for an explanation of why low error in state/adjoint discretization leads to accurate computational estimates for the a-posteriori error).

Figure \ref{fig:mesh_study} (left subfigure) also shows that the computational time for training increases with $K$. Since larger $K$
 leads to slower forward propagation for our network (section \ref{our_archi}), we refrain from using large $K$  values for computational efficiency.

\subsubsection{A-posteriori error estimates in Corollary \ref{cor_one}}

To see how well the estimated upper bound  in  \eqref{error_est} correlates with the true error in  \eqref{error_est}, we plot the true error 
 $\snor{\tilde{\mathcal{J}}\LRp{\{\bx_s(t_T)\}_{s=1}^S}- \tilde{\mathcal{J}}\LRp{\{\bx^{h}_s(t_T)\}_{s=1}^S}}$ vs. the estimated upper bound in \eqref{error_est} and the results are shown in Figure \ref{fig:mesh_study} (right subfigure). For estimating the true error $\snor{\tilde{\mathcal{J}}\LRp{\{\bx_s(t_T)\}_{s=1}^S}- \tilde{\mathcal{J}}\LRp{\{\bx^{h}_s(t_T)\}_{s=1}^S}}$, we assume that $\tilde{\mathcal{J}}\LRp{\{\bx_s(t_T)\}_{s=1}^S}\approx 0$, i.e there exists a true network in \eqref{stat_final_true} that both fits the training data well and generalizes well, achieving near-zero validation loss. Figure \ref{fig:mesh_study} (right panel) shows that the estimated upper bound $\sum_{n=1}^{T-1}\sE_n$ in  \eqref{error_est} decreases with each adaptation iteration and closely approximates the true error.
\begin{figure}[h!]
    \centering

    \begin{subfigure}[b]{0.45\textwidth}
        \centering
        \includegraphics[scale=0.5]{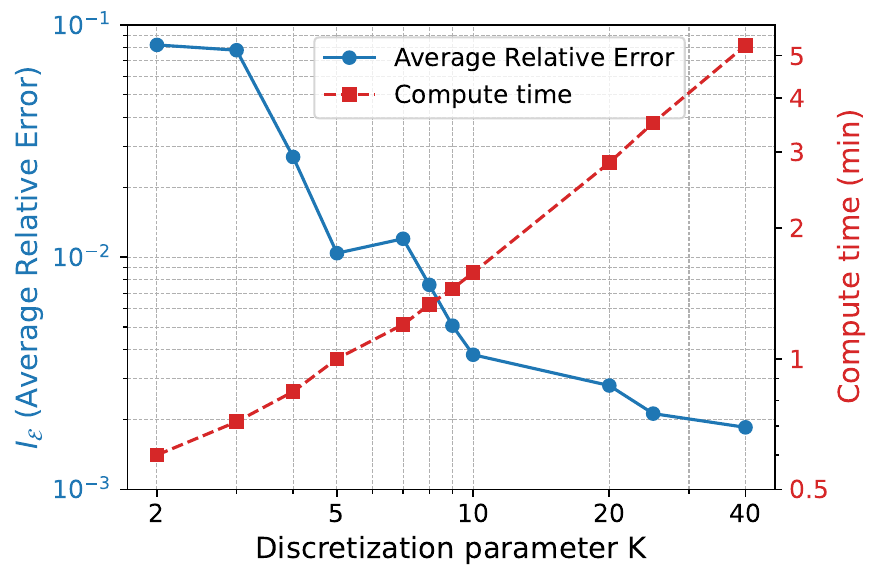}
        \caption{Estimating discretization parameter $K$}
    \end{subfigure}
    \hfill
    \begin{subfigure}[b]{0.45\textwidth}
        \hspace{-1 cm}
        \includegraphics[scale=0.5]{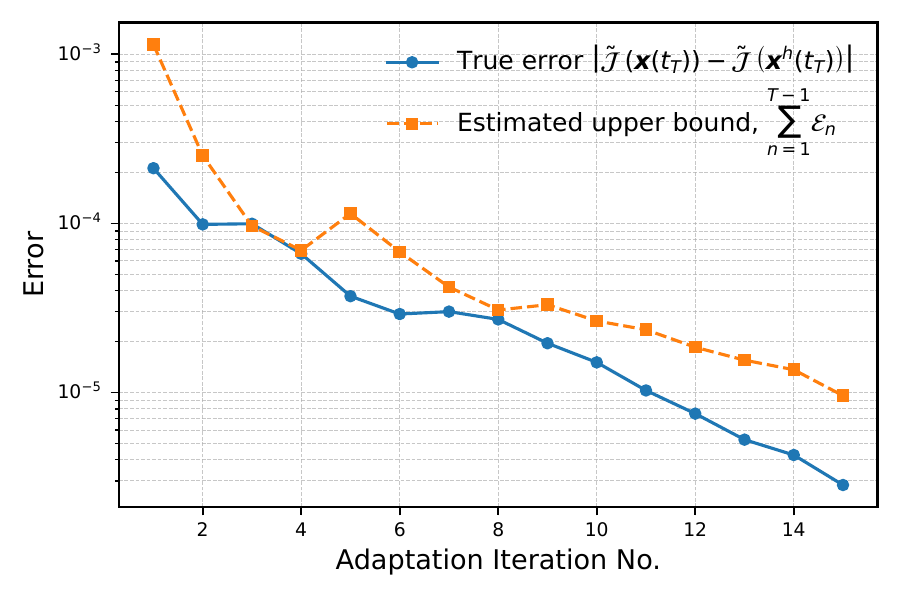}
        \caption{A-posteriori error estimates}
    \end{subfigure}
    \caption{Left to Right: Mesh convergence study for estimating discretization parameter $K$; A-posteriori error estimates at different iterations of the algorithm}
    \label{fig:mesh_study}
\end{figure}
An  important aspect of the derived upper bound in  \eqref{error_est}  is that it tells us how the total error $\sum_{n=1}^{T-1}\sE_n$  distributes across each interval $I_n$ ($\sE_n$ is the error in interval $I_n$). We can then use it for adapting the network  by adding a new layer (with new weights and biases) in the interval with maximum error as described in line 7 of Algorithm \ref{Algo_full}. Figure \ref{decomp} shows the decomposition of error $\sE_n$ computed using \eqref{error_est}  at different iterations of the algorithm. We clearly see that upon adding a new hidden layer at the location of maximum error (indicated as a red block in Figure \ref{decomp}) and retraining the network, the error decreases at the location. 

\begin{figure}[h!]
    \hspace{-0.6 cm}
        \includegraphics[scale=0.35]{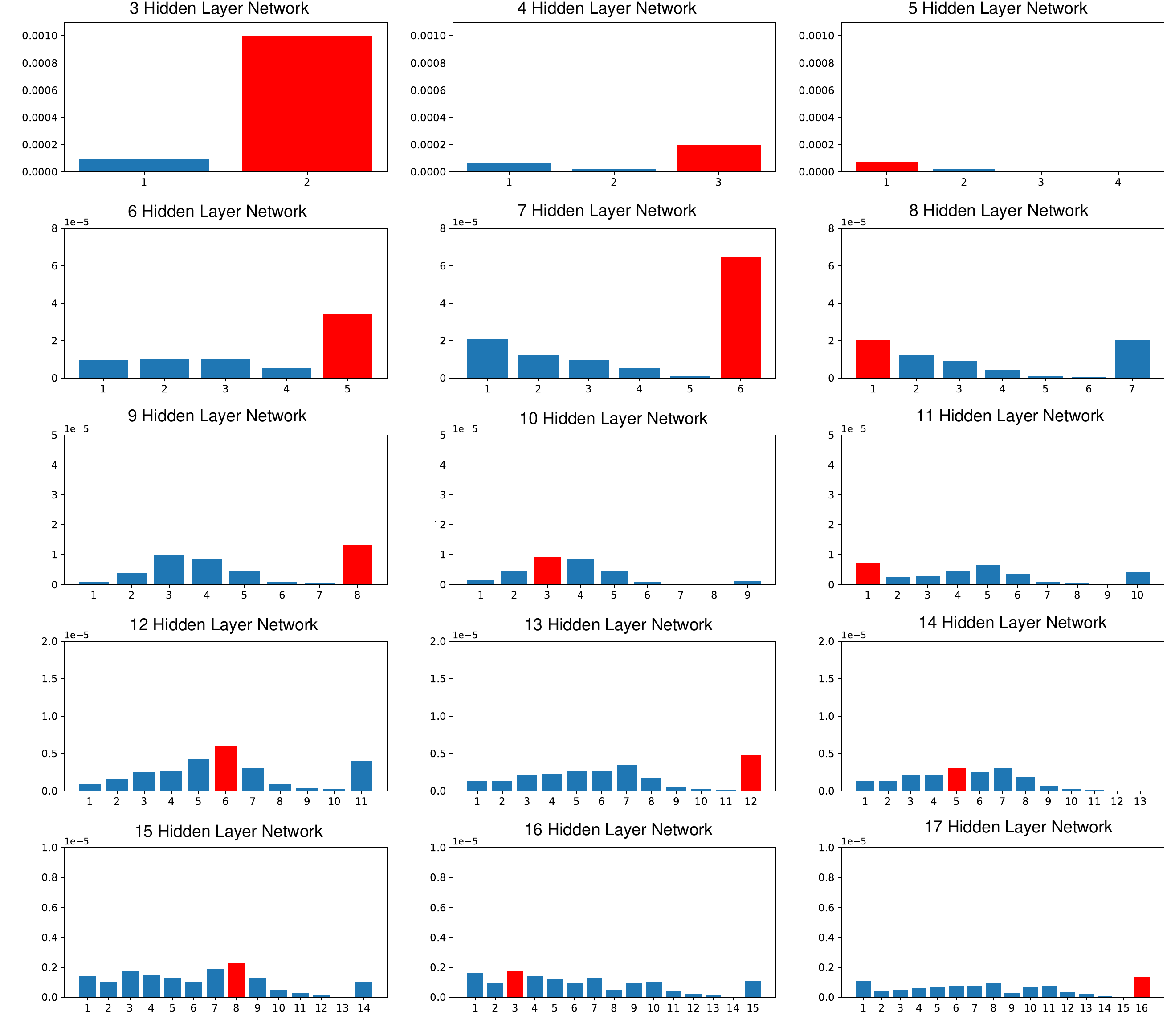}
    \caption{Decomposition of the a-posteriori error $\sE_n$ in \eqref{error_est} at different iterations of the algorithm (x-axis denotes interval number and y-axis denotes the magnitude of error). Block in red denotes the interval of maximum error where a new layer is added for the next training phase.}
    \label{decomp}
\end{figure}
\begin{figure}[h!]
\hspace{-0.2 cm}
    \begin{subfigure}[b]{0.3\textwidth}
        \includegraphics[scale=0.37]{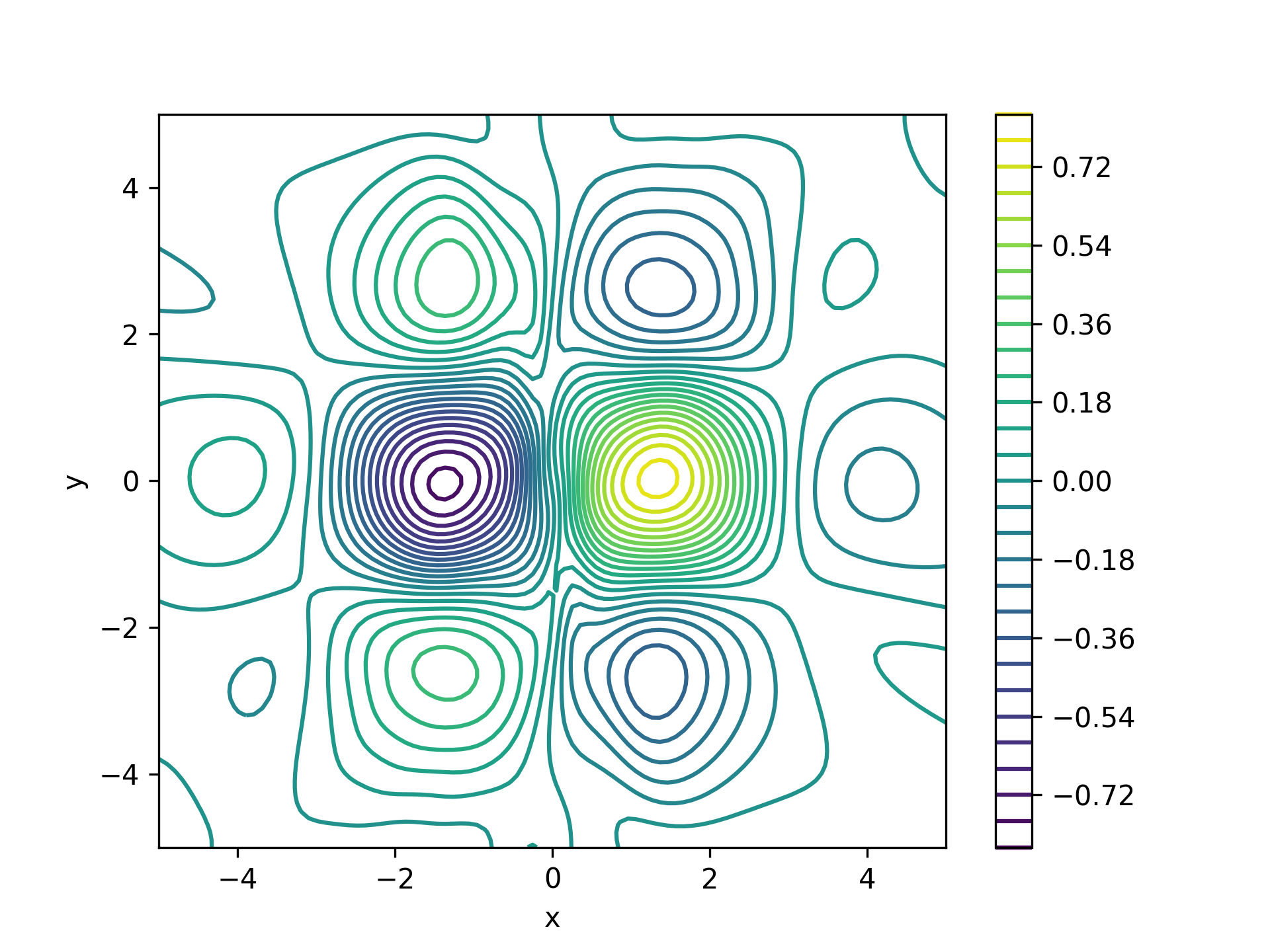}
        \caption{3 hidden layers}
    \end{subfigure}
 \hspace{-0.2 cm}
    \begin{subfigure}[b]{0.3\textwidth}
        \includegraphics[scale=0.37]{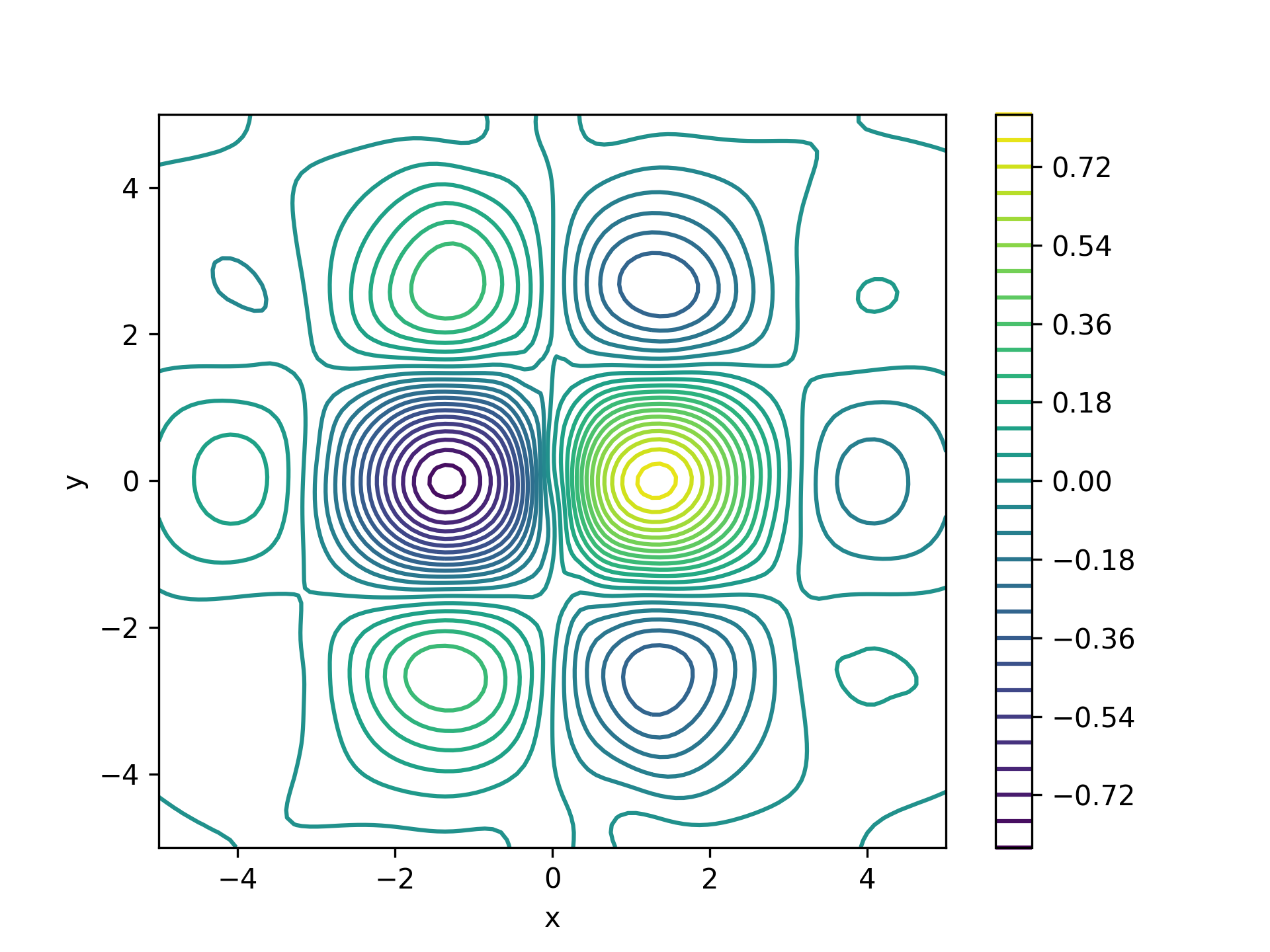}
        \caption{7 hidden layers}
    \end{subfigure}
    \hspace{-0.2 cm}
    \begin{subfigure}[b]{0.3\textwidth}
        \includegraphics[scale=0.37]{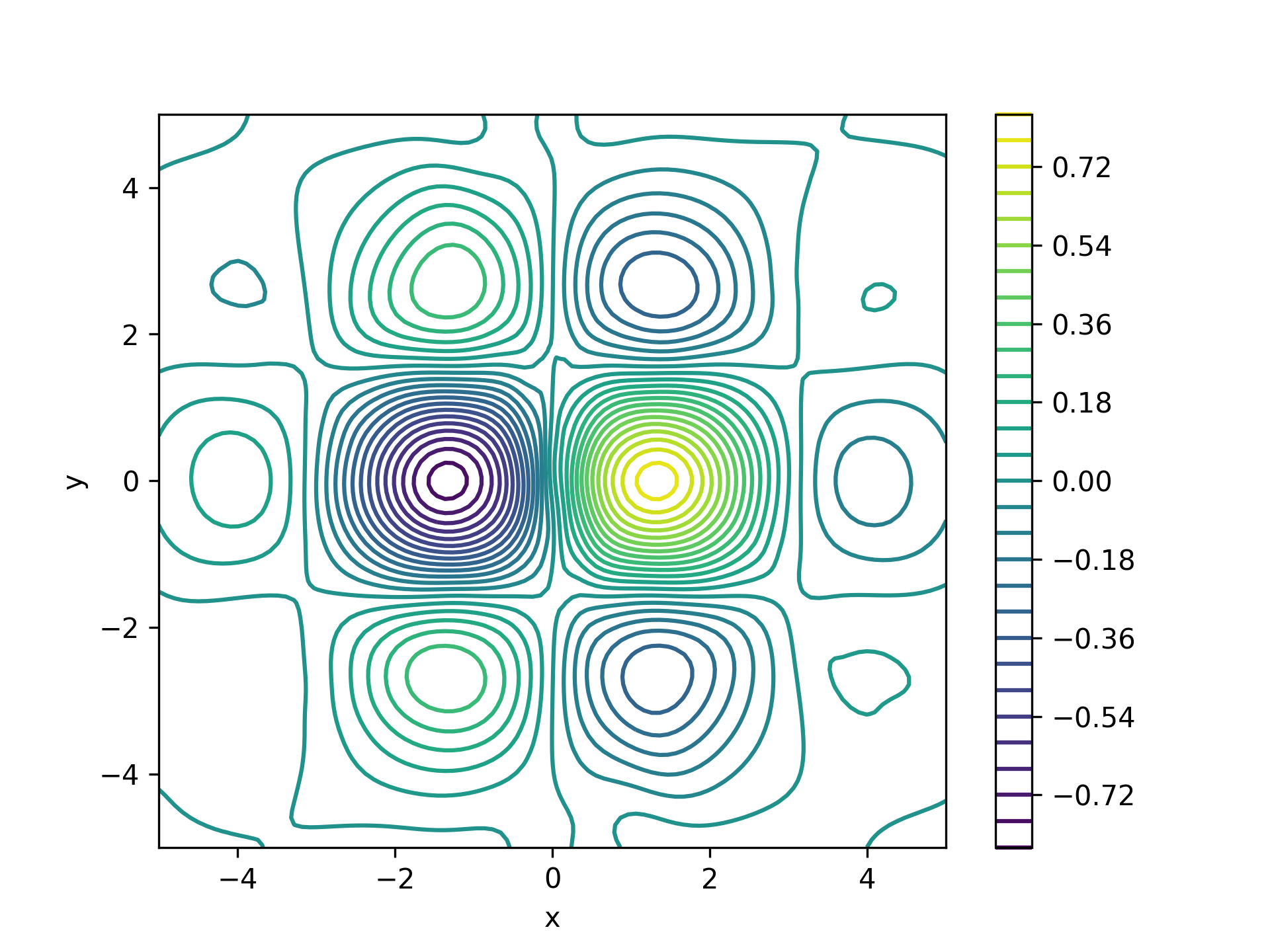}
        \caption{10 hidden layers}
    \end{subfigure}

    \vspace{0.5cm}
\hspace{-0.2 cm}
    \begin{subfigure}[b]{0.3\textwidth}
        \includegraphics[scale=0.37]{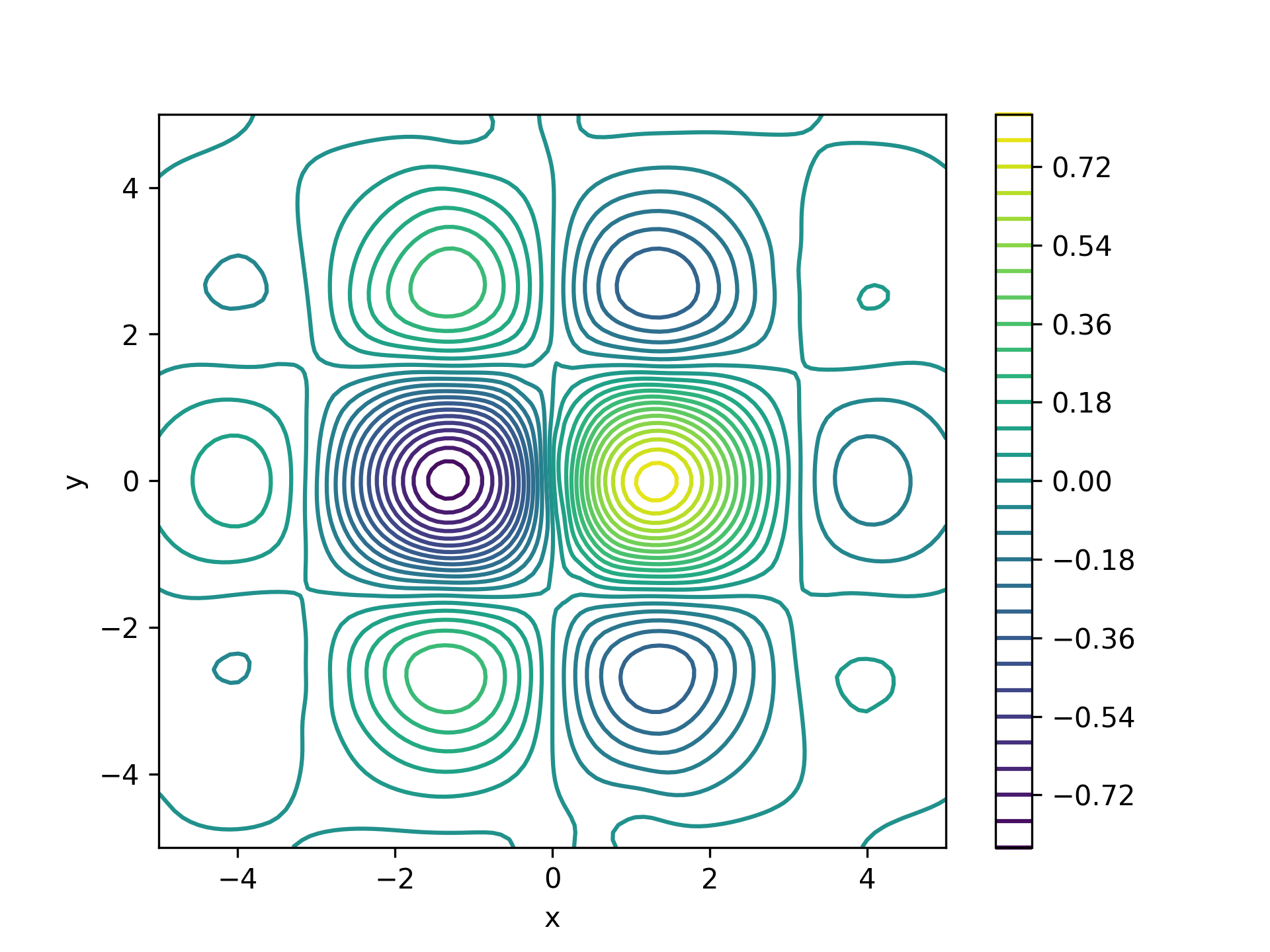}
        \caption{12 hidden layers}
    \end{subfigure}
 \hspace{-0.2 cm}
    \begin{subfigure}[b]{0.3\textwidth}
        \includegraphics[scale=0.37]{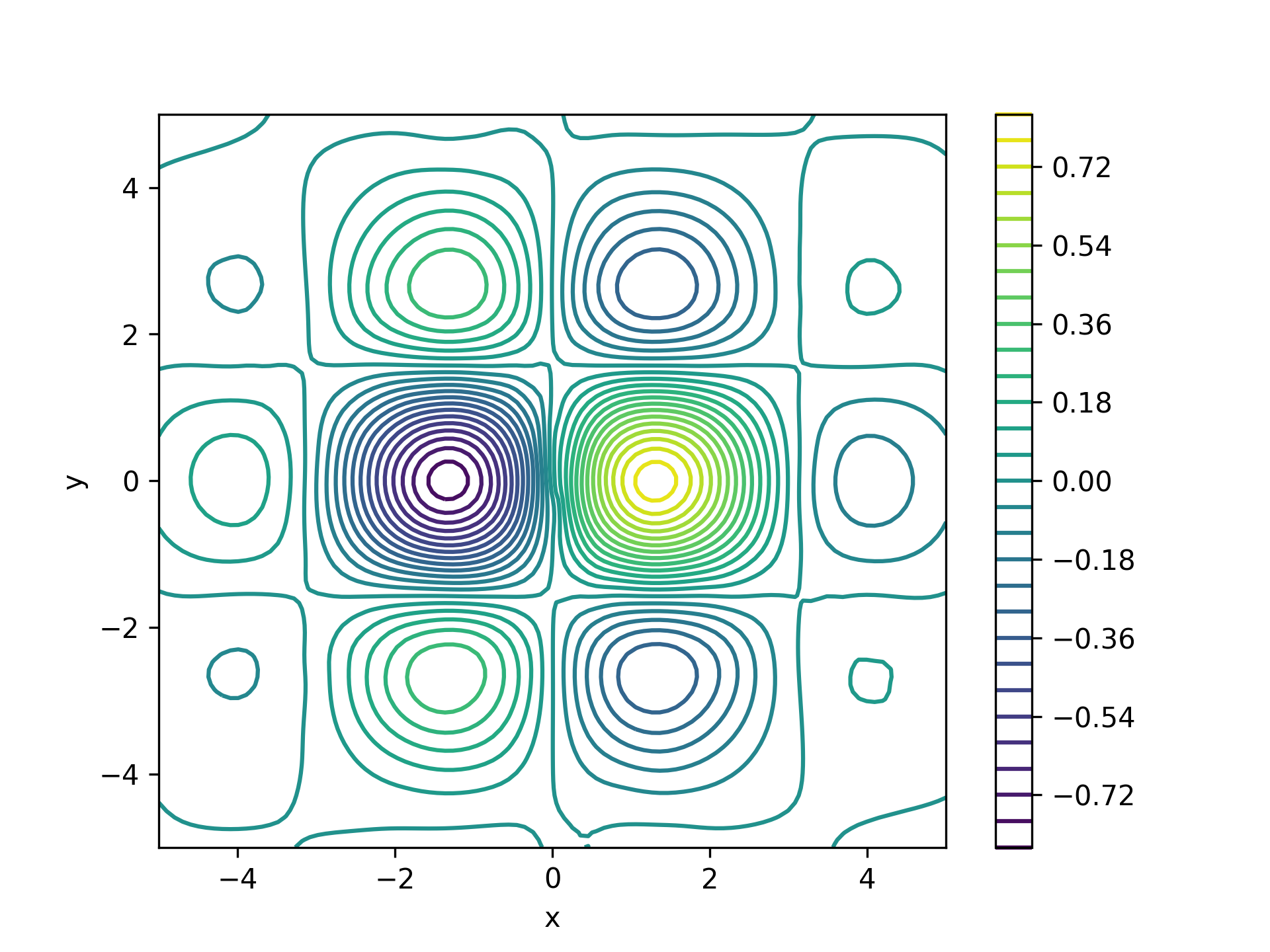}
        \caption{17 hidden layers}
    \end{subfigure}
    \hspace{-0.2 cm}
    \begin{subfigure}[b]{0.3\textwidth}
        \includegraphics[scale=0.37]{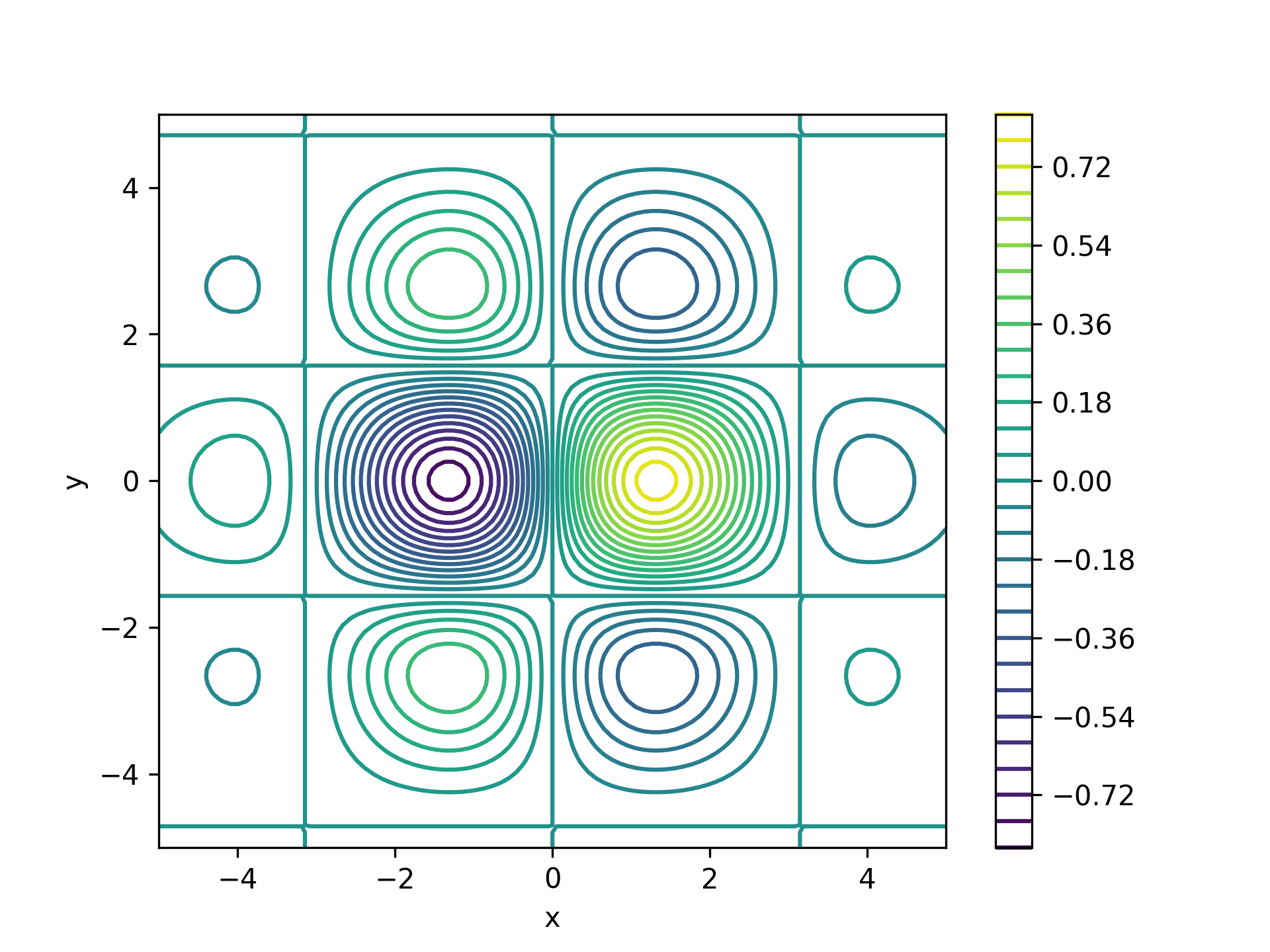}
        \caption{True solution}
    \end{subfigure}
\caption{Evolution of contours of the solution upon adding new hidden layers in our proposed framework. Network with $17$ hidden layers is able to learn the true function accurately.}
    \label{evolution_toy}
\end{figure}
\begin{figure}[h!]
\centering
\includegraphics[scale=0.4]{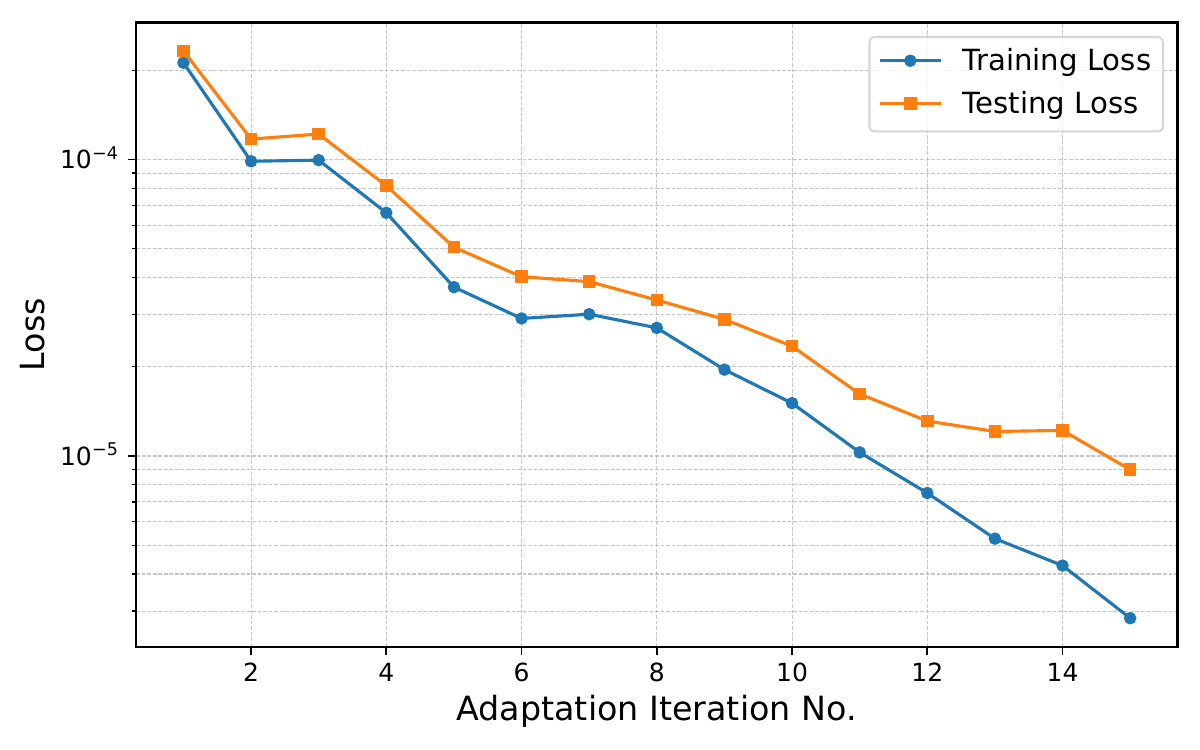}
\caption{Training loss and testing loss achieved at different iterations of the algorithm.}
\label{test_train}
\end{figure}

The improvement in solution on adding new layers is shown in Figure \ref{evolution_toy} where one clearly sees that the algorithm progressively picks up complex features in the solution and generalizes better as one adds more layers. Figure \ref{test_train} shows how the testing loss decreases with adaptation iteration. 
\begin{figure}[h!]
    \begin{subfigure}[b]{0.3\textwidth}
        \includegraphics[scale=0.37]{Figures/toy/proposed14.png}
          \caption{Proposed approach}
    \end{subfigure}
    \begin{subfigure}[b]{0.3\textwidth}
        \includegraphics[scale=0.37]{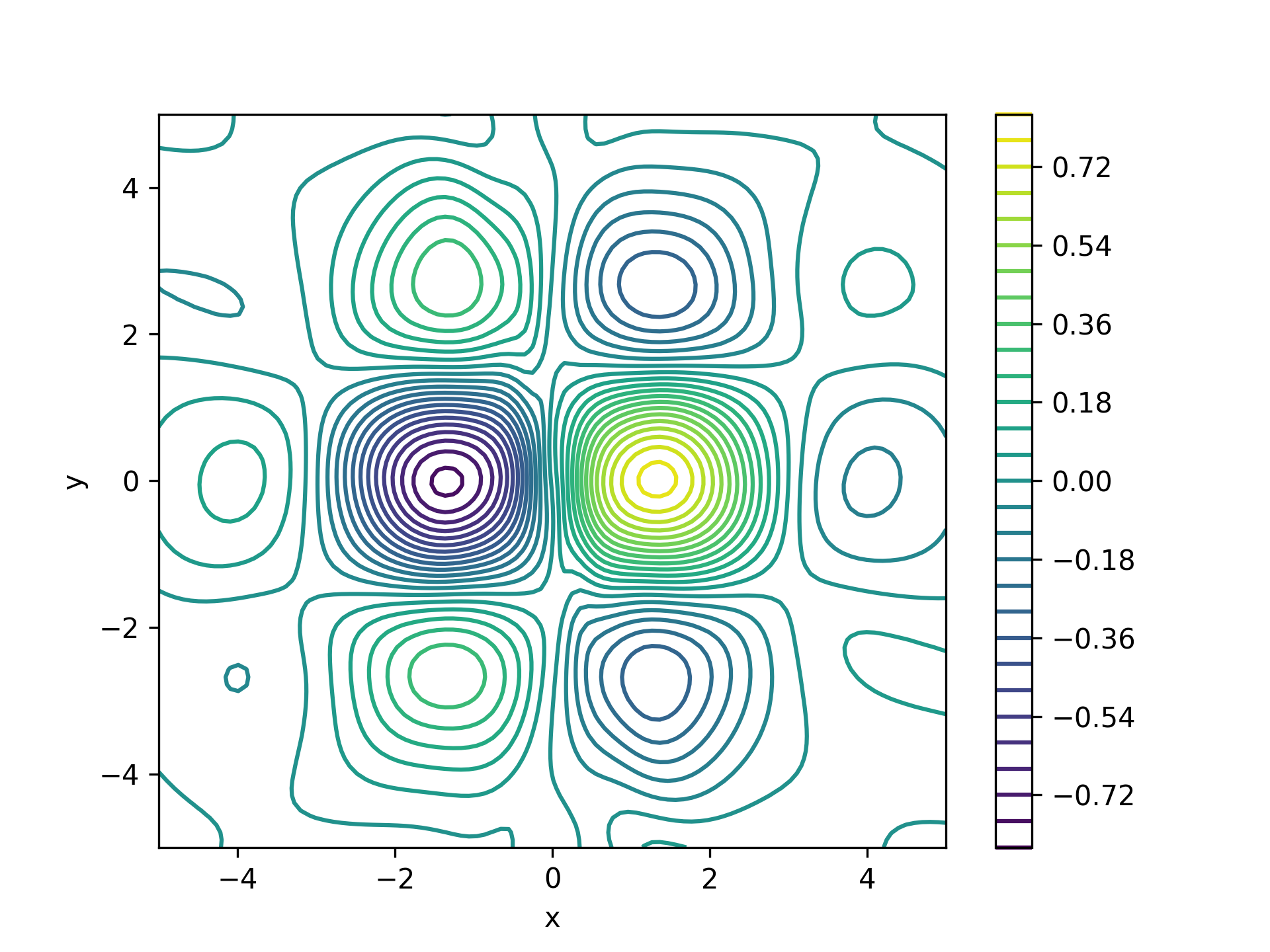}
             \caption{Random layer insertion}
    \end{subfigure}
    \begin{subfigure}[b]{0.3\textwidth}
        \includegraphics[scale=0.37]{Figures/true.png}
          \caption{True solution}
    \end{subfigure}
    \caption{Predicted contours by different approaches. Our approach outperformed all the other adaptation strategies.}
\label{comparison}
\end{figure}

\begin{table}[h!]
\caption{Best test loss (Mean squared error) achieved by different adaptation strategies along with the computational time. }
\centering
\begin{tabular}{|c | c | c |c| }
        \hline
     & Test loss achieved  & Time taken \\ 
      & (mean squared error) &   \\  \hline
    Proposed approach  & \textcolor{green!50!black}{$9.0\times 10^{-6}$ (final test loss)}& $46$ min\\
      &$2.5\times 10^{-5}$ (intermediate loss) & $14$ min\\ \hline
         Random layer insertion &$9.41\times 10^{-5}$& $5$ min  \\ \hline
Net2DeeperNet \cite{chen2015net2net}&$7.66\times 10^{-5}$ & $4.5$ min \\  \hline
Forward thinking \cite{hettinger2017forward} &$7.1\times 10^{-4}$  & $2.5$ min  \\  \hline
Baseline network &$3.82\times 10^{-5}$  & $9$ min \\  \hline
\end{tabular} 
 \label{num_result}
\end{table}

Further, Figure \ref{comparison} shows the learnt contours of the function using  different adaptation strategies. It is clear from Figure \ref{comparison} that our proposed approach provides the most accurate predictions. Further, Table \ref{num_result} shows the mean squared loss achieved on the test dataset by different approaches.

Table \ref{num_result} shows that our approach produced the best test loss in comparison to other adaptation strategies. However, we note that the training time of our approach is signifiantly higher than other approaches. The reason for this is as follows:
\begin{enumerate}
    \item Subdiscretization for the states and adjoints, i.e the use of parameter $K$ leads to slower forward propagation for our network (section \ref{our_archi}) in comparison to a conventional neural network architecture.
    \label{point_1}
    \item We note that other adaptation strategies terminated early (after adding a few layers) since no improvement in result (decrease in validation loss) was observed on adding a new layer (see line 4 in Algorithm \ref{Algo_full}).
    \label{point_2}
    \item Additionally, the optimizer used in Algorithm \ref{Algo_full} automatically stops training if no improvement in validation loss is seen for a specified number of consecutive epochs (details are provided in Appendix \ref{hyper_parameter_n}). We observed that our approach trains for a larger number of epochs, demonstrating better generalization capability compared to other approaches. This also leads to an increased training time.
     \label{point_3}
\end{enumerate}
Table \ref{num_result} also shows that one may terminate the algorithm much earlier and still achieve a reasonably low test loss of $2.5\times 10^{-5}$ in a shorter computational time ($14$ min).

\subsection{Learning the observable to parameter map for Navier Stokes equation.}
\label{navier}
In this section, we consider an inverse problem concerning the 2D Navier-Stokes equation for
viscous and incompressible fluid \cite{chorin1968numerical,canuto2006spectral} in the vorticity-streamfunction formulation written as:
\begin{equation}
\begin{aligned}
    \partial_t \omega(\bx,\ t)+\bv(\bx,\ t)\cdot \nabla \omega(\bx,\ t)&=\nu \Delta \omega(\bx,\ t),\quad \bx \in [0,\ 2\pi]^2,\ t\in (0,\ T_t] \\
    \nabla \cdot \bv(\bx,\ t)&=0,\quad \quad \quad \quad \quad \quad \quad \bx \in [0,\ 2\pi]^2,\ t\in (0,T_t]\\
    \omega(\bx,\ 0)&=\omega_0(\bx),\quad \quad \quad \quad \quad \quad \bx \in [0,\ 2\pi]^2,
\end{aligned}
\label{navier_equation_o}
\end{equation}
where $\bv(\bx,\ t)$ is the velocity field, $\omega(\bx,\ t)$ is the vorticity, $\omega_0(\bx)$ is the initial vorticity, and $\nu=10^{-3}$ is the viscosity coefficient. Periodic boundary conditions are imposed on all boundaries. The spatial domain is discretized with $64 \times 64$ uniform mesh, and we consider the
time horizon $t \in (0, 0.5)$. Our objective is to reconstruct the initial vorticity $\omega_0(\bx)$ from the measurements of vorticity at $10$ randomly selected observation points at the final time $T_t=0.5$ (see right subfigure in Figure \ref{fig:nav_mesh}).  We therefore construct the input vector for the neural network as $[\omega(\bx_1,\ 0.5),\ \dots, \omega(\bx_{n_0},\ 0.5)]$, where $\bx_i$ are fixed locations  on the domain, and $n_0=10$. The output of the neural network are coefficients $\bc = (c_1,\hdots, c_{n_{T+1}})$ which can then be used to reconstruct the initial vorticity $\omega_0(\bx)$ (see \eqref{kl_nav} below). We choose $n_{T+1}=50$.

{\it{ Note that the combination of information loss through temporal evolution and sparse spatial measurements makes this inverse problem severely ill-posed.}}

\begin{figure}[htbp]
    \centering

        \includegraphics[scale=0.4]{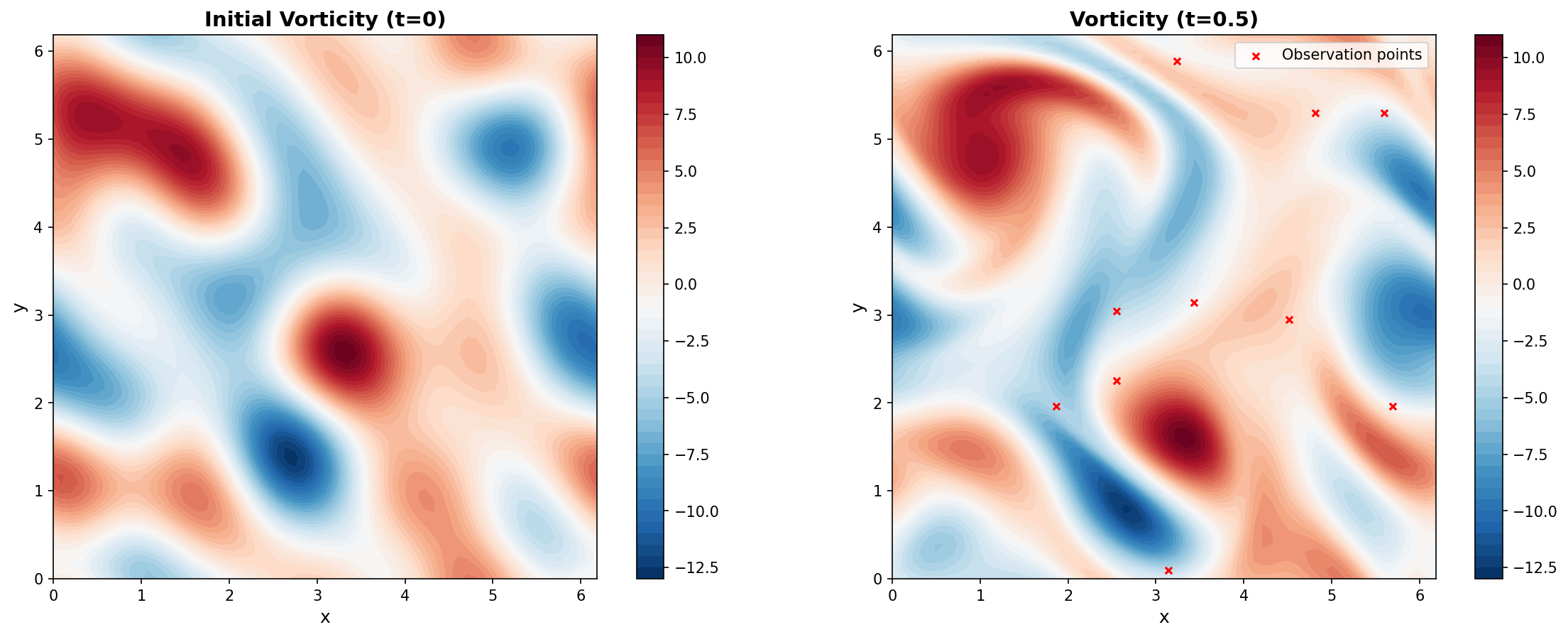}
        \caption{Left to right: Initial vorticity field; Vorticity field at time $t=0.5$ along with observation points (marked in red) used for inversion.}
    \label{fig:nav_mesh}
\end{figure}

\vspace{0.2cm}
\noindent{\bf{\underline{Data generation and numerical results}}}
For learning the inverse map, we draw
samples of $\omega(\bx,\ 0)$ based on the truncated Karhunen-Lo\`eve expansion as follows:
\begin{equation}
    \omega(x,\ 0)=\sum_{i=1}^{n_{T+1}}\sqrt{\lambda_i}\phi_i(x)c_i,
    \label{kl_nav}
\end{equation}
where $\bc = (c_1,\hdots, c_{n_{T+1}})$ are the KL coefficients drawn uniformly from $[0, 1]$, and $(\lambda_i, \ \phi_i)$ are eigenpairs obtained by the eigendecomposition of the squared exponential covariance kernel
\begin{equation}
K(\bx, \bx') = \exp\left(-\frac{\|\bx - \bx'\|^2}{2\ell^2}\right),
\end{equation}
with length scale $\ell = 0.3$, subject to periodic boundary conditions. For demonstration, we choose $n_{T+1}=50$. For a given $\omega_0(\bx)$, we solve the Navier-Stokes equation \eqref{navier_equation_o} using a pseudospectral method in Fourier space with the RK45 (Runge-Kutta) time integration scheme  to compute $\omega(\bx,\ 0.5)$ on the grid, which is then used to generate the observation vector $[\omega(\bx_1,\ 0.5),\ \dots, \omega(\bx_{n_0},\ 0.5)]$ at the fixed observation locations. For a new observation data, the network outputs the vector $\bc$ which can then be used to reconstruct $\omega_0(\bx)$ using \eqref{kl_nav}. In addition, $1 \%$ additive Gaussian noise is added to the observations $\by$ to represent the actual field condition.

We consider experiments with training data set of size $S=700$. 
We consider an additional $100$ data points for validation data set and $300$ data points for testing data set. Other details on the hyperparameter settings are provided in Table \ref{hyper_parameter_n}. 

Figure \ref{decomp_nav} shows the decomposition of error $\sE_n$ computed using \eqref{error_est} for the first few iterations of Algorithm \ref{Algo_full}. We clearly see that upon adding a new hidden layer at the location of maximum error (indicated as a red block in Figure \ref{decomp}) and retraining the network, the error decreases at the location. 
\begin{figure}[h!]
    \centering
    \begin{subfigure}[b]{0.45\textwidth}
        \centering
        \includegraphics[width=\textwidth]{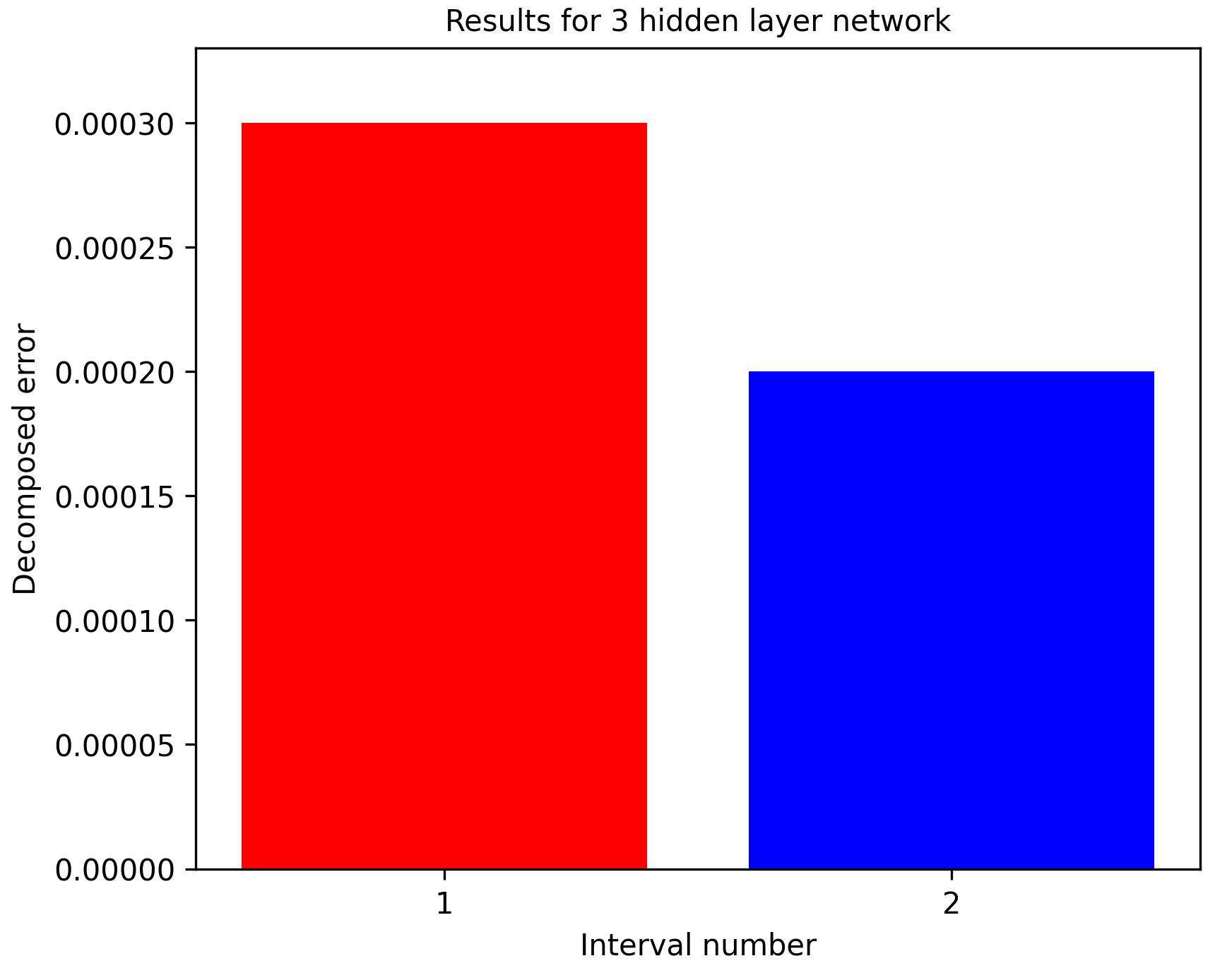}
        \caption{First iteration}
    \end{subfigure}
    \hfill
    \begin{subfigure}[b]{0.45\textwidth}
        \centering
        \includegraphics[width=\textwidth]{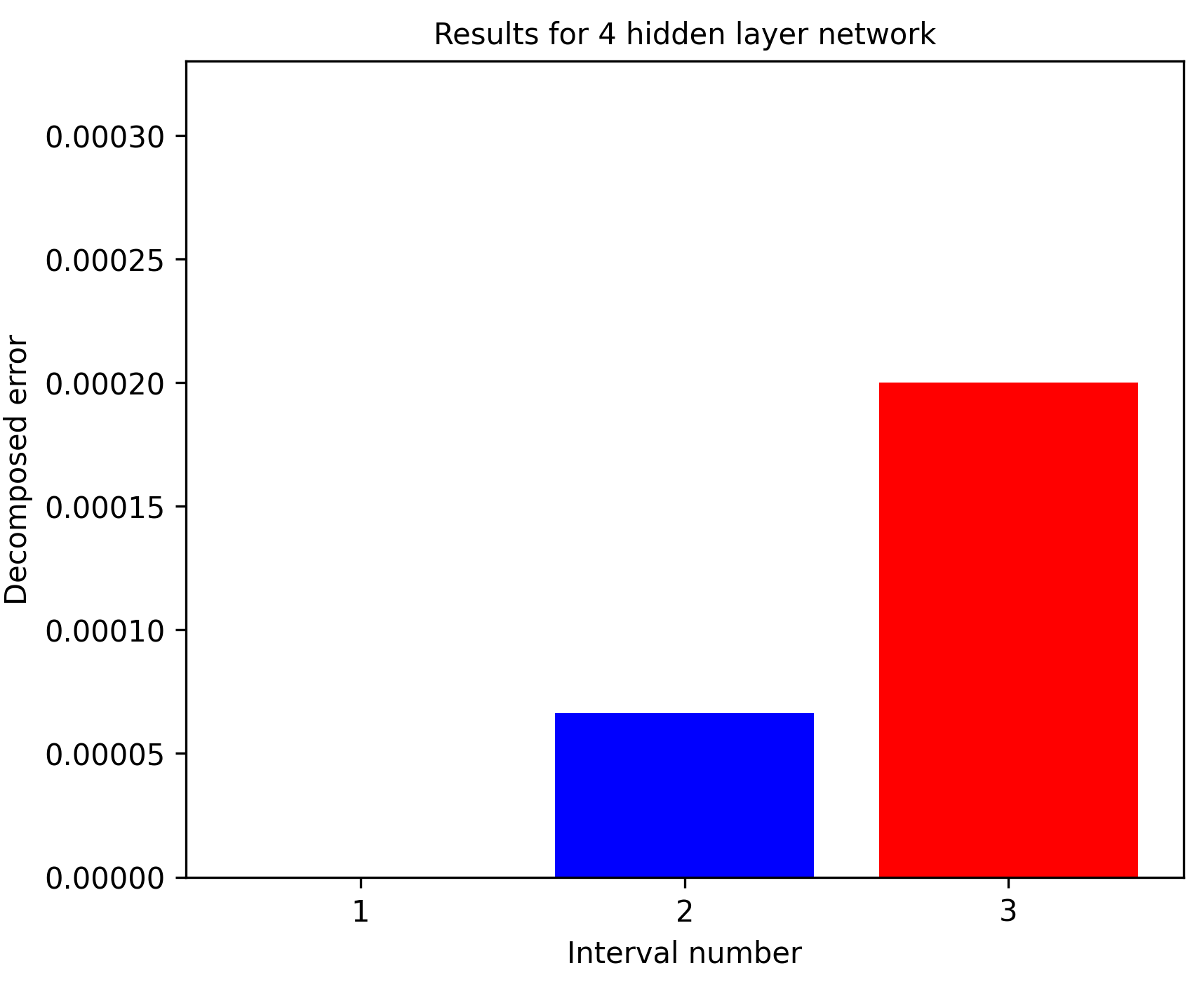}
        \caption{Second iteration}
    \end{subfigure}
    
    \vspace{0.5cm}
    
    \begin{subfigure}[b]{0.45\textwidth}
        \centering
        \includegraphics[width=\textwidth]{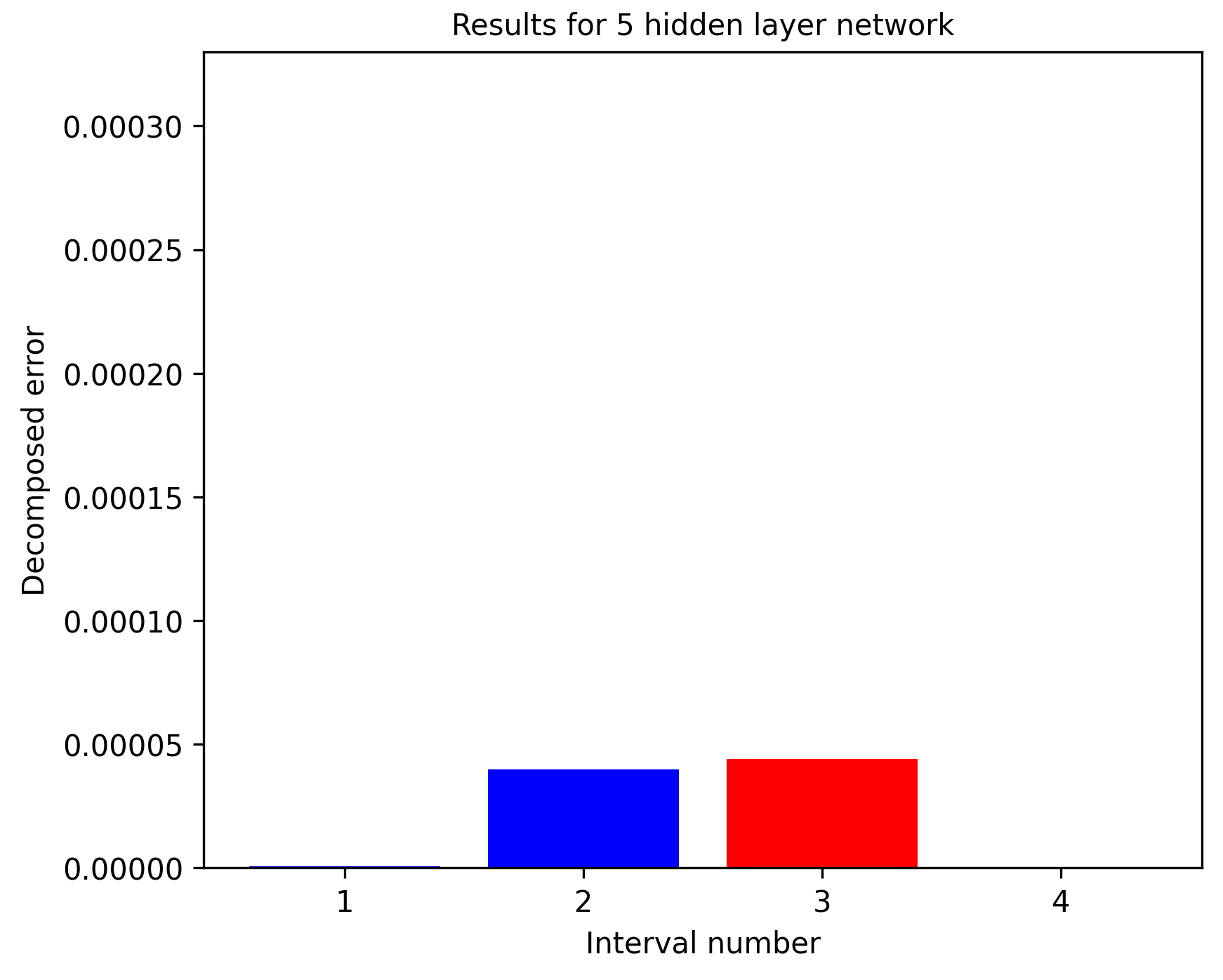}
        \caption{Third iteration}
    \end{subfigure}
    \hfill
    \begin{subfigure}[b]{0.45\textwidth}
        \centering
        \includegraphics[width=\textwidth]{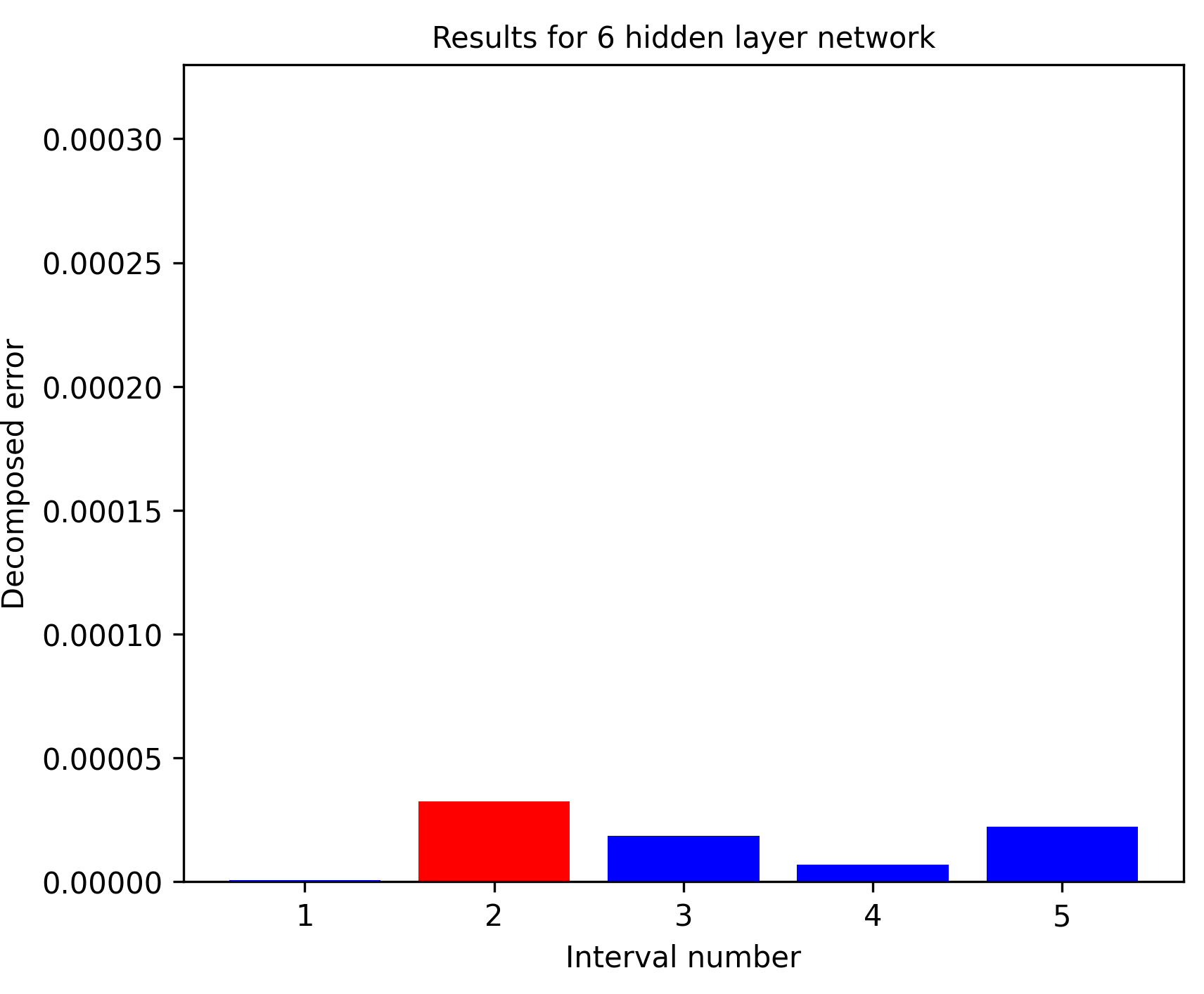}
        \caption{Fourth iteration}
    \end{subfigure}
    \caption{Decomposition of the a-posteriori error $\sE_n$ in \eqref{error_est} for the first few iterations of the algorithm. Block in red denotes the interval of maximum error where a new layer is added for the next training phase.}
    \label{decomp_nav}
\end{figure}

To quantify the performance of our algorithm, we compute the average
relative errors on the test dataset  as follows:
\begin{equation}
    \text{Err}=\frac{1}{M}\sum_{i=1}^M \frac{\norm{\boldsymbol{\omega}^{pred}_i-\boldsymbol{\omega}^{true}_i}^2}{\norm{\boldsymbol{\omega}^{true}_i}^2},
    \label{avg_rel_err}
\end{equation}
where $M$ denotes the number of test data samples, $\boldsymbol{\omega}^{pred}_i$ 
denotes the neural network prediction for the  $i^{th}$ test sample. Note that the network outputs the vector $\bc$ which is then used to reconstruct $\omega_0(\bx)$ using \eqref{kl_nav}. Here, $\boldsymbol{\omega}^{\text{pred}}_i$ denotes the vector of solution $\omega_0(\bx)$ on a $64 \times 64$ grid. $\boldsymbol{\omega}^{true}_i$ denotes the corresponding synthetic ground truth
vorticity field. 
\begin{figure}[h!]
    \centering
             \includegraphics[scale=0.25]{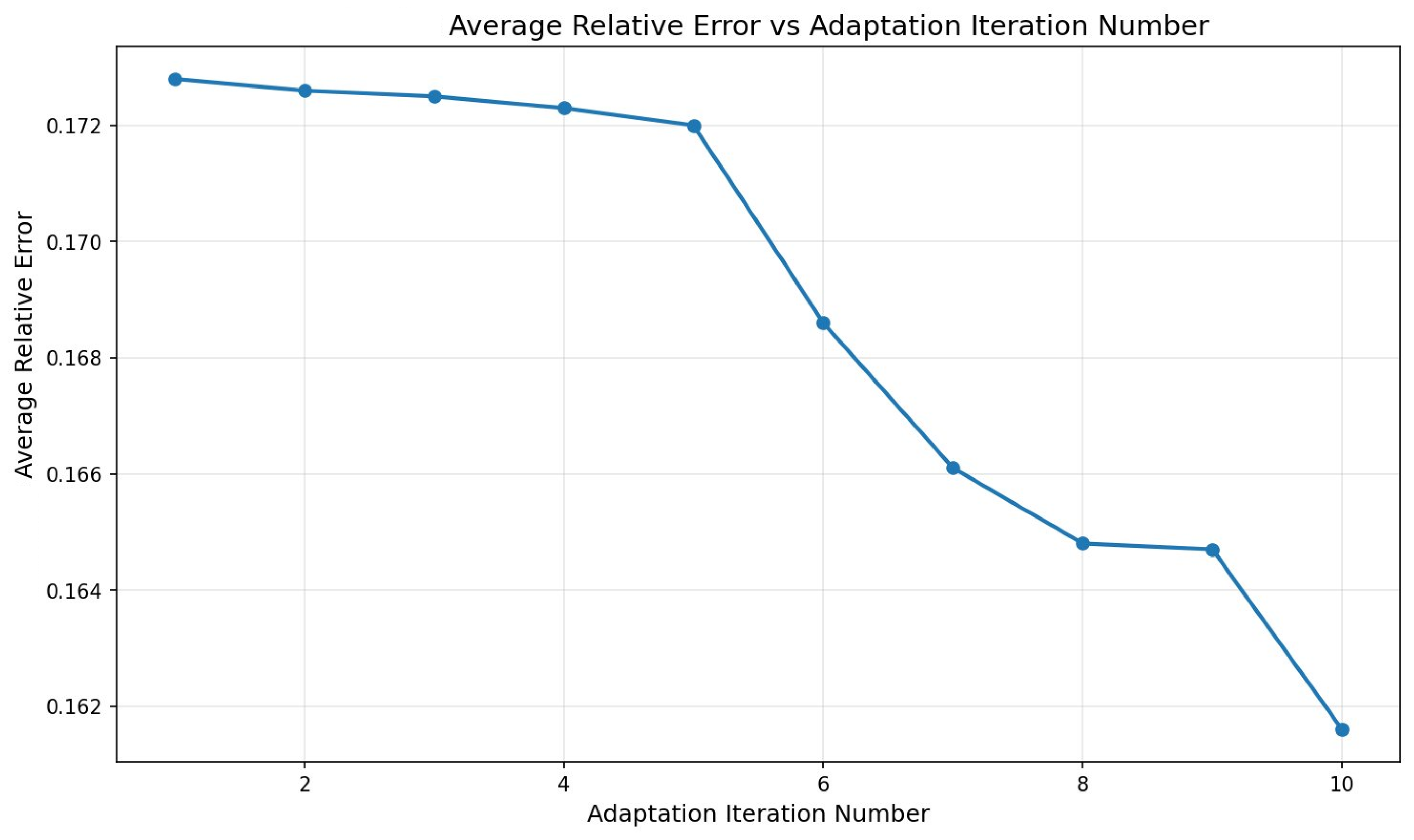}
    \caption{Average relative error \eqref{avg_rel_err}  achieved on the test dataset at different iterations of the algorithm.}
    \label{loss_nav}
\end{figure}
Figure \ref{loss_nav} shows how the average relative error on the test dataset decreases at different iterations of the algorithm. As more layers are added, the network becomes more expressive, leading to improved generalization. In addition, Figure \ref{pred_navier_sam} shows the vorticity field (a random test sample) predicted by our proposed approach at termination of our Algorithm \ref{Algo_full}. Figure \ref{pred_navier_sam} shows that the predicted vorticity is close to the true vorticity field.
\begin{figure}[h!]
    \centering
    \begin{subfigure}[b]{0.4\textwidth}
        \centering
        \includegraphics[width=\textwidth]{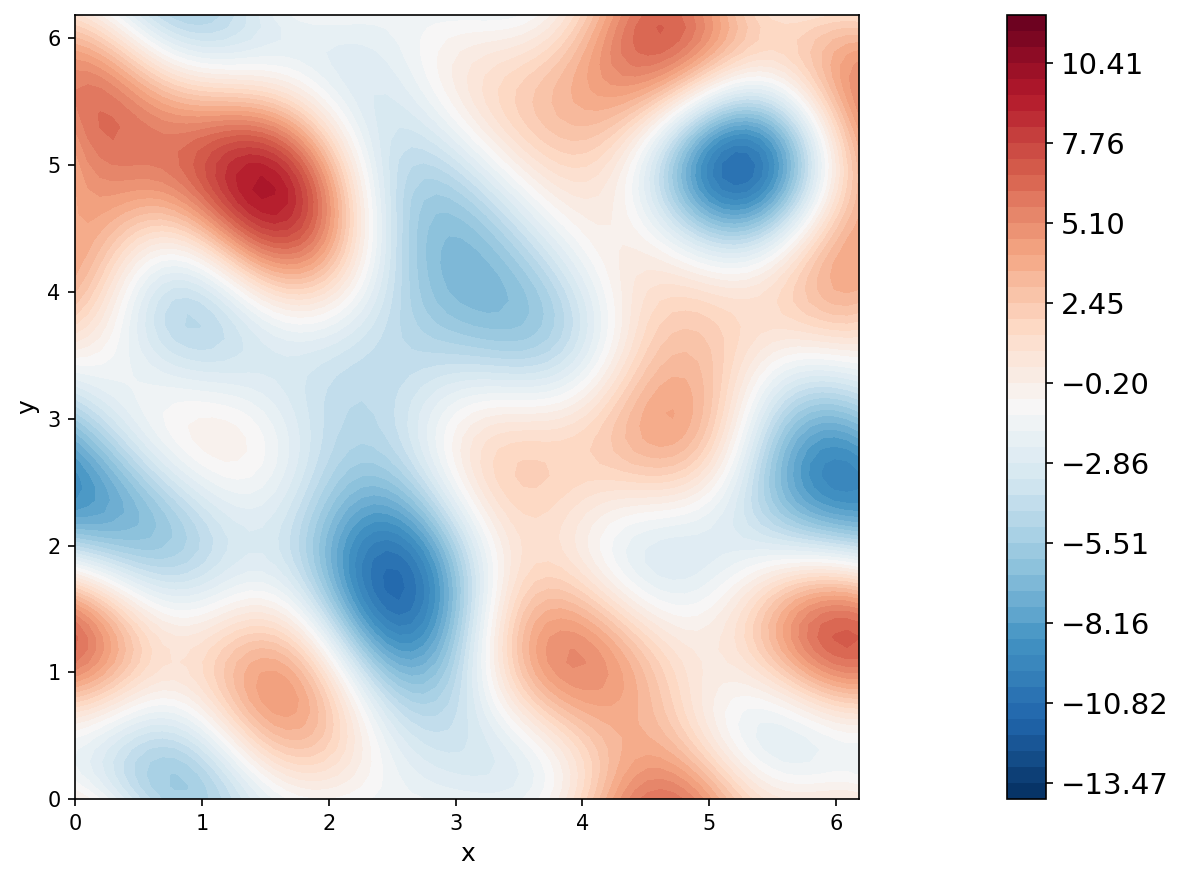}
    \end{subfigure}
    \hspace{1 cm}
    \begin{subfigure}[b]{0.4\textwidth}
        \centering
        \includegraphics[width=\textwidth]{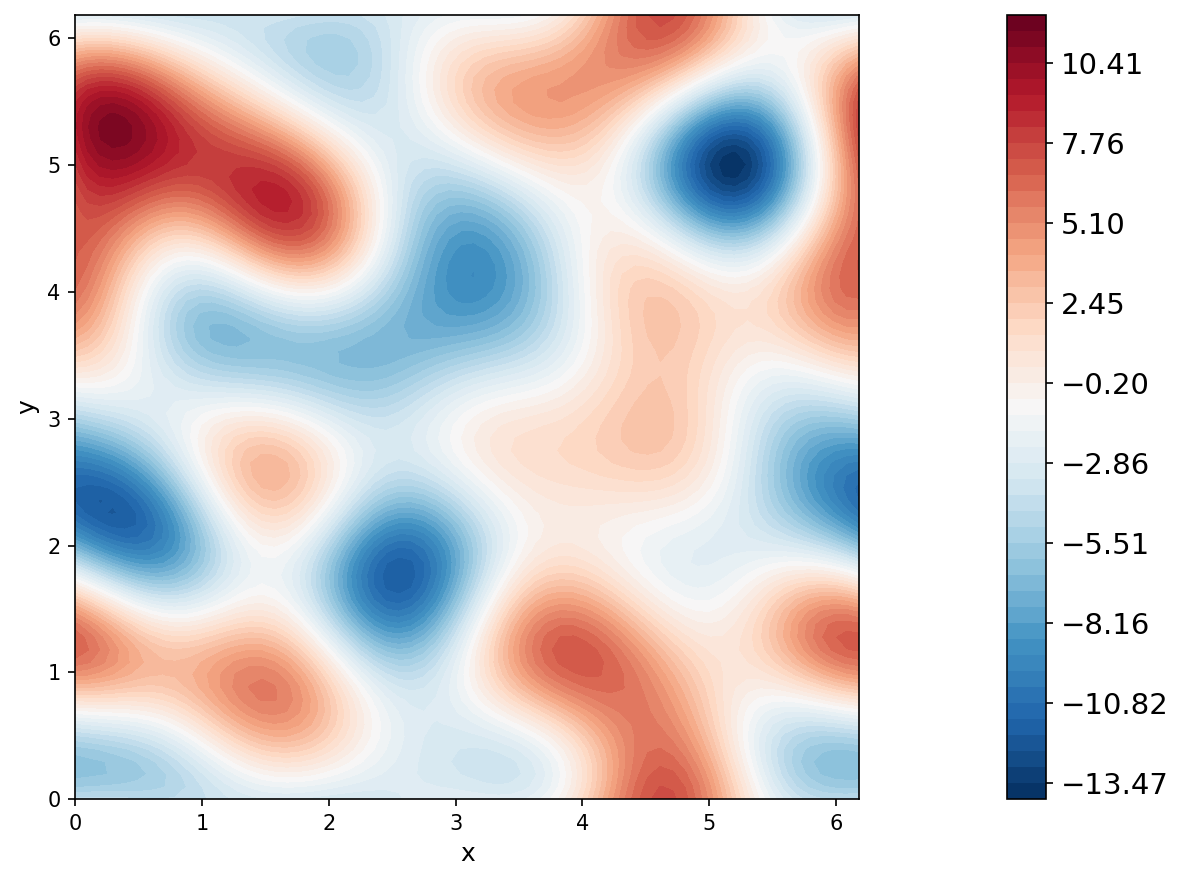}
    \end{subfigure}
    
    \caption{Left to Right: Predicted vorticity field; and true vorticity field for a randomly chosen test sample.}
    \label{pred_navier_sam}
\end{figure}
 Further, Table \ref{num_result_2} shows the average relative error \eqref{avg_rel_err} achieved on the test dataset by different approaches. Table \ref{num_result_2} shows that our approach produced the best average relative error in comparison to other adaptation strategies. However, note that the training time of our approach is  higher than other approaches and the reason for this is discussed in item \ref{point_1}, item \ref{point_2}, and item \ref{point_3}.
\begin{table}[h!]
\caption{Best test loss (average relative error) achieved by different adaptation strategies along with the computational time. }
\centering
\begin{tabular}{|c | c | c |c| }
        \hline
     & Test loss achieved  & Time taken \\ 
      & (average relative error) &   \\  \hline
    Proposed approach  & \textcolor{green!50!black}{$0.161$ (final test loss)}& $9$ min\\
      &$0.165$ (intermediate loss) & $5$ min\\ \hline
         Random layer insertion &$0.170$& $1$ min  \\ \hline
Net2DeeperNet \cite{chen2015net2net}&$0.171$ & $2$ min \\  \hline
Forward thinking \cite{hettinger2017forward} &$0.172$  & $30$ sec  \\  \hline
Baseline network &$0.166$  & $3$ min \\  \hline
\end{tabular} 
 \label{num_result_2}
\end{table}
Further, in an attempt to estimate the accuracy of each adaptation strategy (for a graphical representation of the error by each approach), we define the pointwise average relative error as:
\begin{equation}
    \text{Err}_j=\frac{1}{M}\sum_{i=1}^M \frac{\LRp{\boldsymbol{\omega}^{pred}_{i,j}-\boldsymbol{\omega}^{true}_{i,j}}^2}{\norm{\boldsymbol{\omega}^{true}_i}^2/|\boldsymbol{\omega}_i|},
    \label{error_metric_nav}
\end{equation}
where subscript $j$ denotes the $j^{th}$ component of $\boldsymbol{\omega}_i$ and $|\boldsymbol{\omega}_i|$ denotes the number of elements in the vector. Figure \ref{pt_navier} shows the error $\text{Err}_j$ plotted for  the main adaptation strategies. It is quite clear from  Figure \ref{pt_navier} that our approach outperformed all other adaptation strategies in terms of producing a lower error over the spatial domain. 
\begin{figure}[h!]
    \hspace{0.7 cm}
    \begin{subfigure}[b]{0.3\textwidth}
        \includegraphics[scale=0.25]{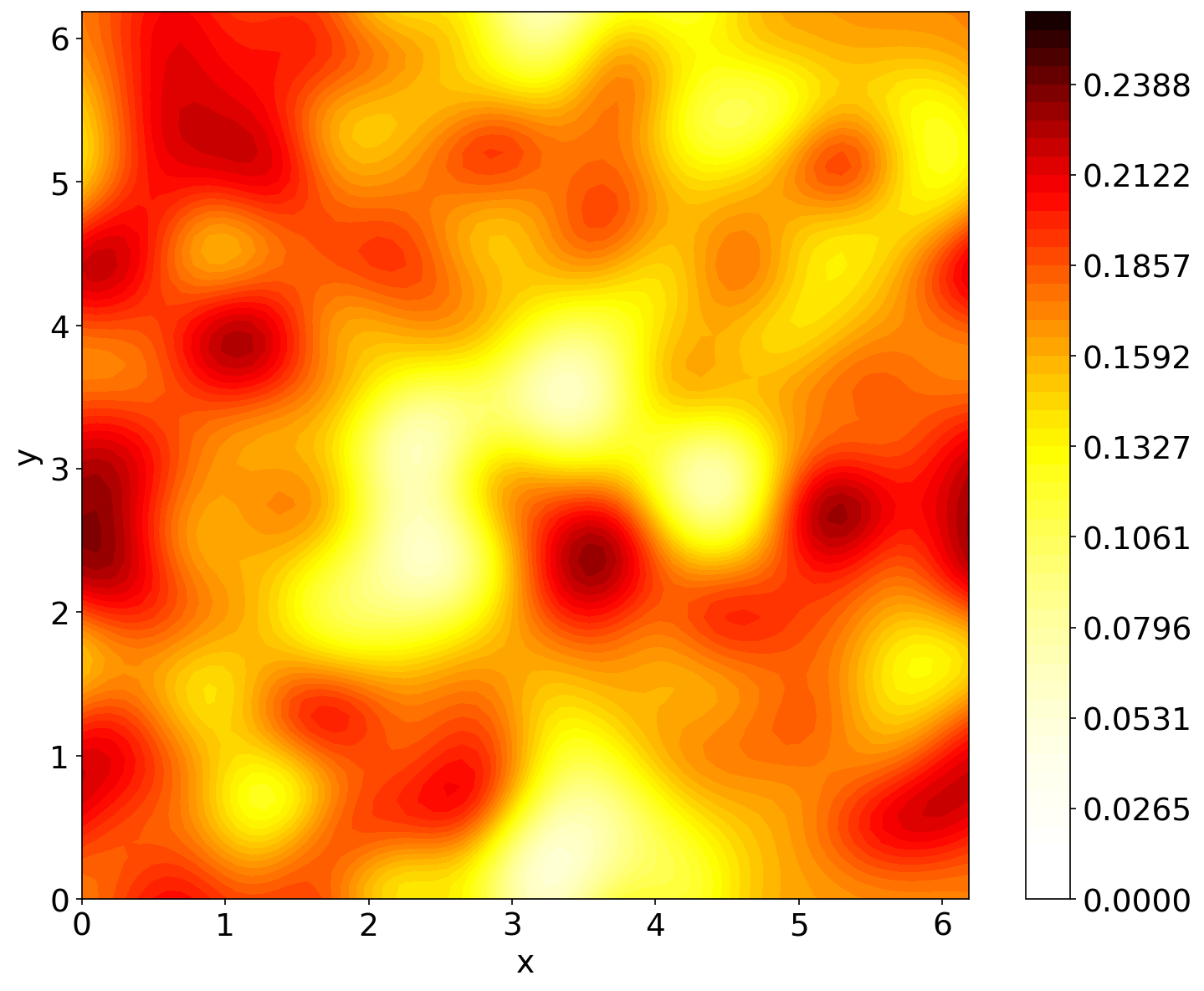}
        \caption{Proposed approach}
    \end{subfigure}
    \hspace{2 cm}
    \begin{subfigure}[b]{0.3\textwidth}
        \centering
        \includegraphics[scale=0.25]{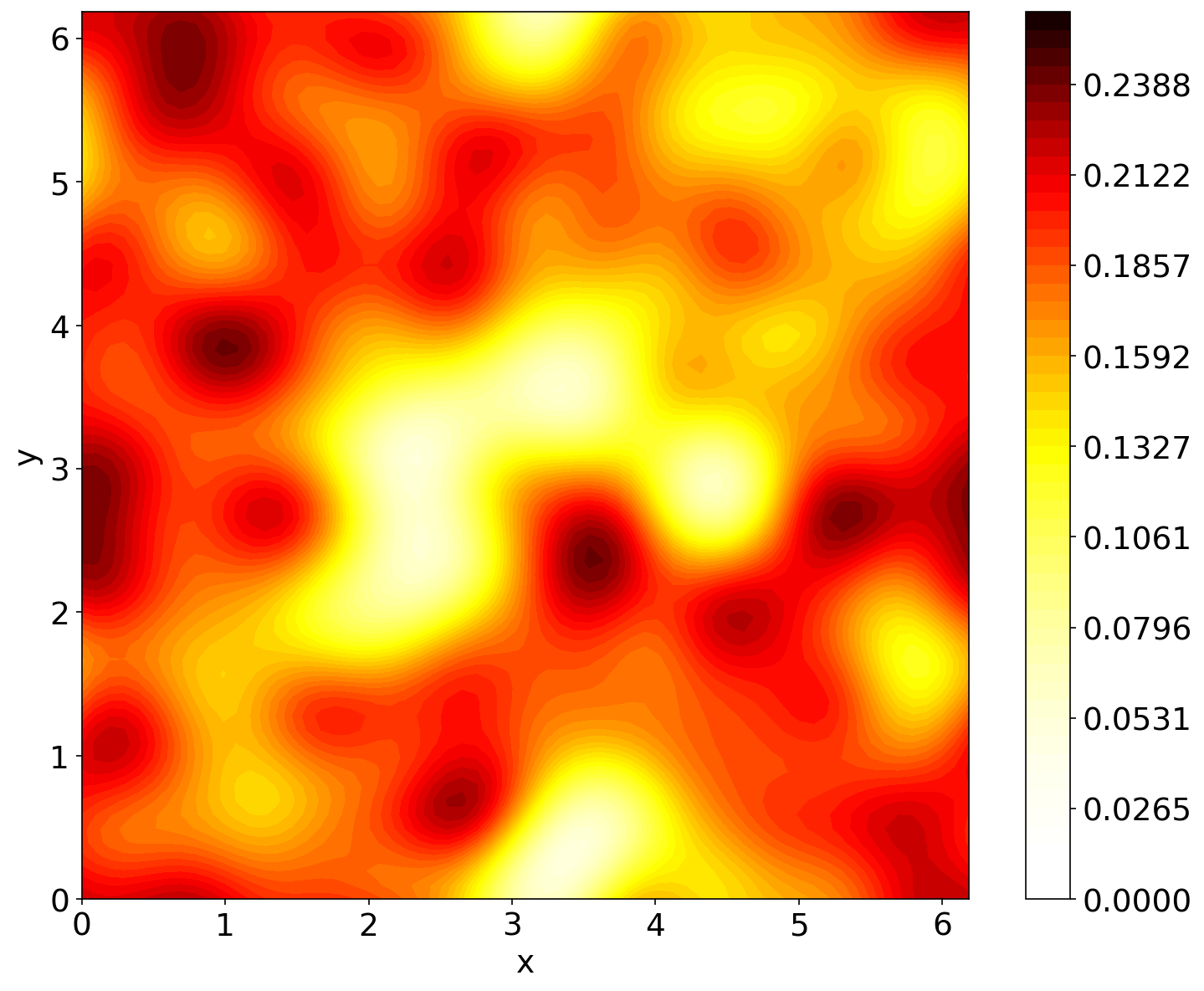}
        \caption{Random layer insertion}
    \end{subfigure}
    
    \vspace{0.5cm}
    \hspace{0.7 cm}
    \begin{subfigure}[b]{0.3\textwidth}
        \centering
        \includegraphics[scale=0.25]{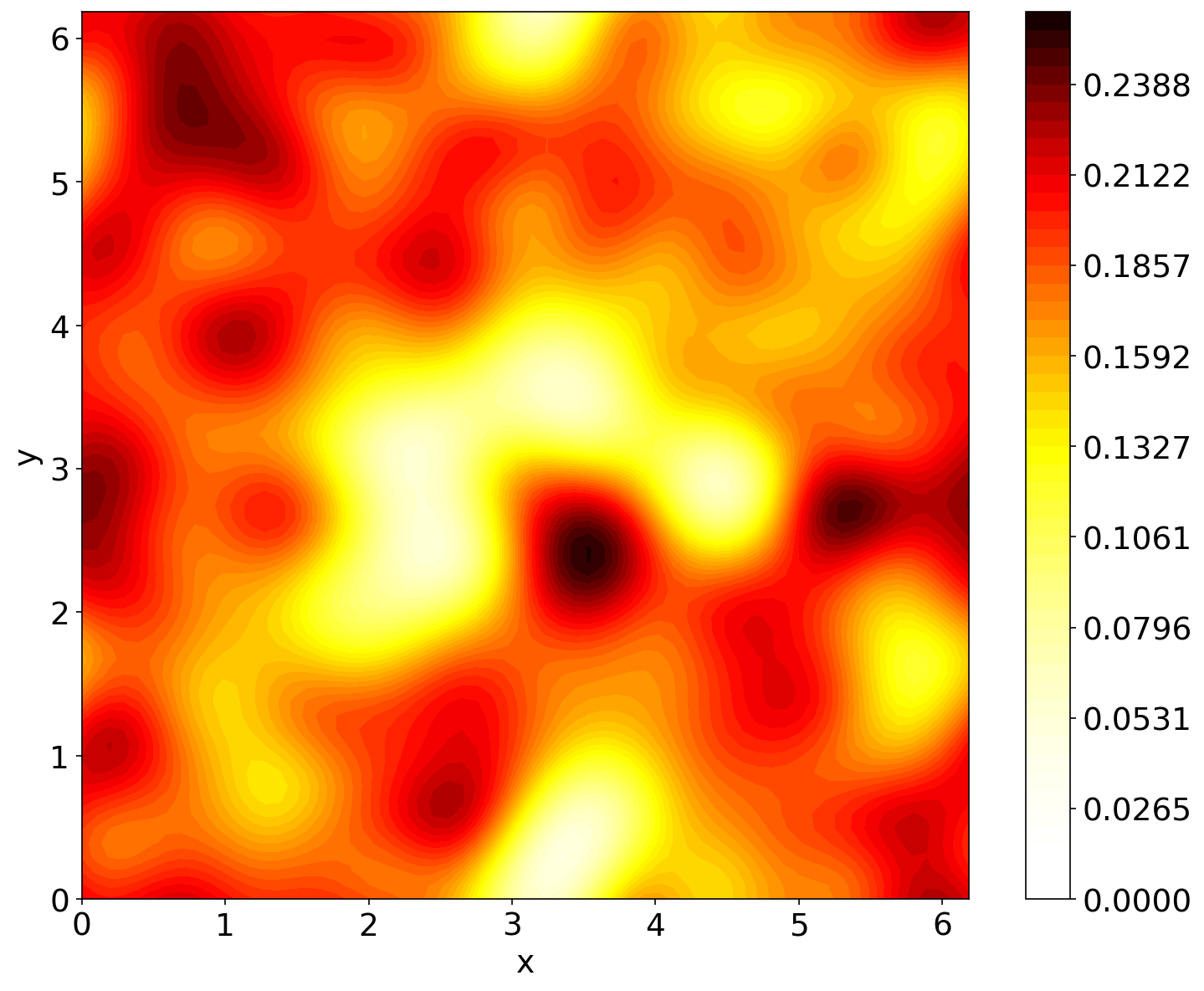}
        \caption{Net2DeeperNet \cite{chen2015net2net}}
    \end{subfigure}
     \hspace{2cm}
    \begin{subfigure}[b]{0.3\textwidth}
        \centering
        \includegraphics[scale=0.25]{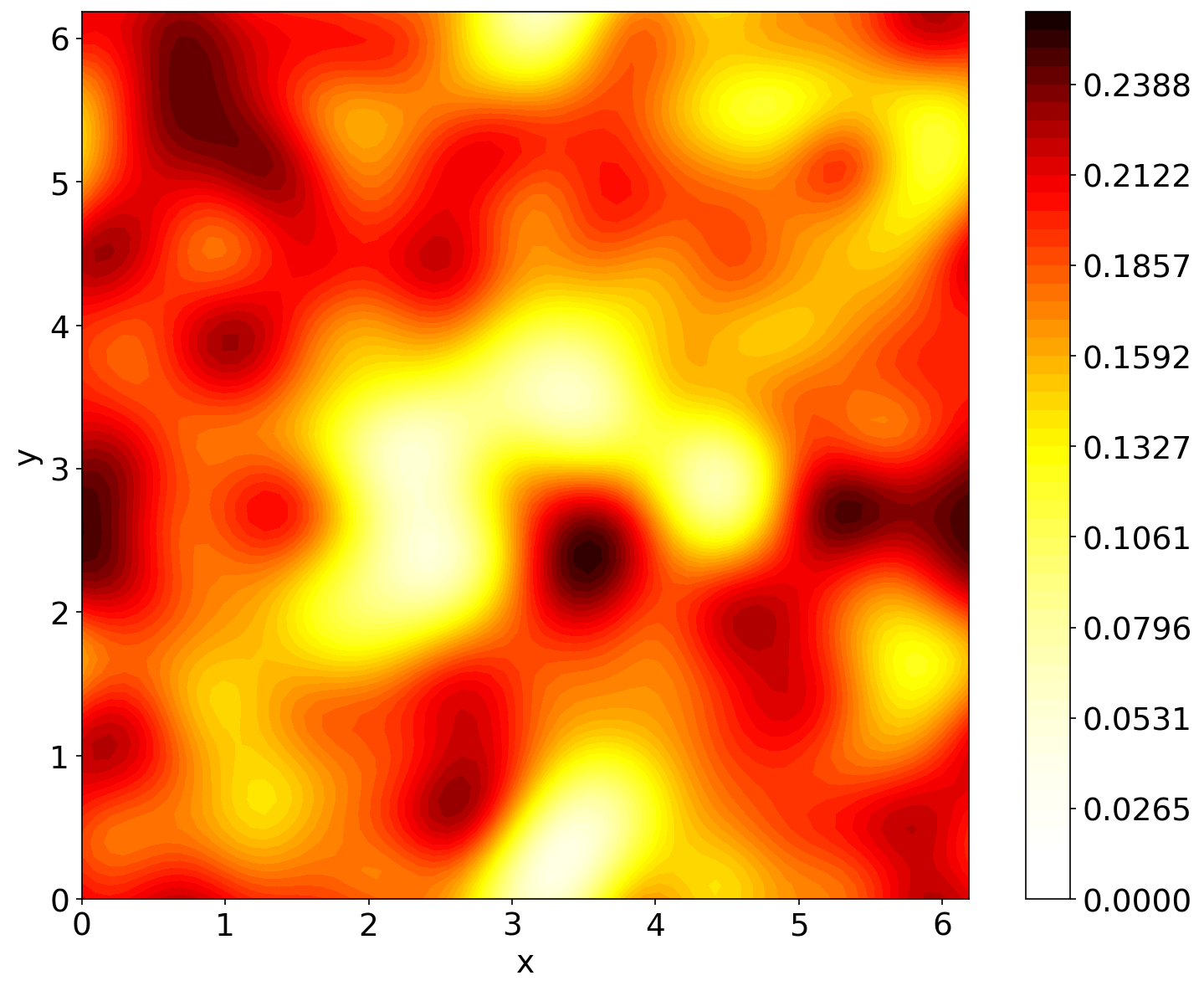}
        \caption{Forward thinking \cite{hettinger2017forward}}
    \end{subfigure}
    
    \vspace{0.5cm}
    \hspace{4.5cm}
    \begin{subfigure}[b]{0.3\textwidth}
        \centering
        \includegraphics[scale=0.25]{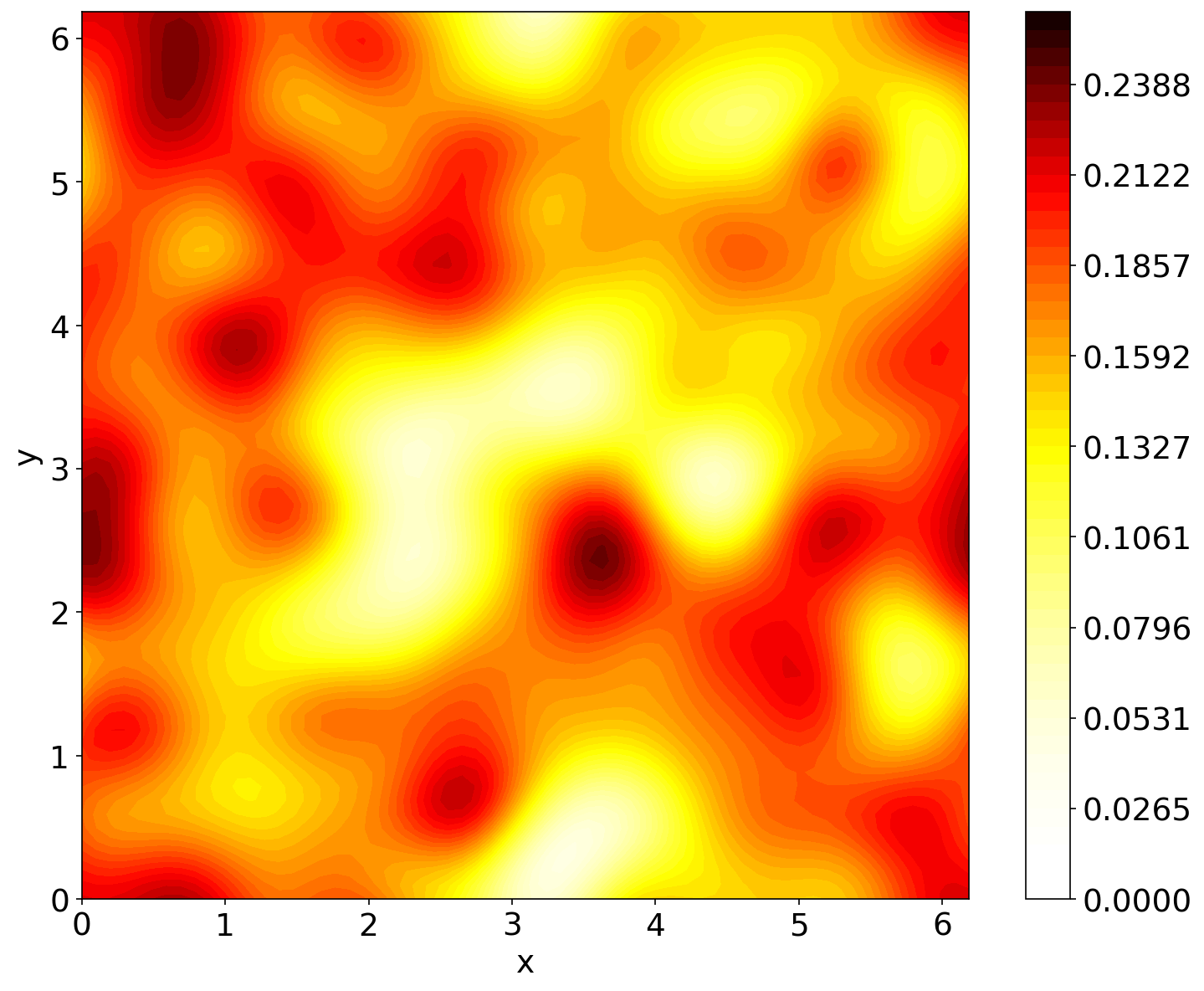}
        \caption{Baseline network}
    \end{subfigure}
    
    \caption{Pointwise average relative error \eqref{error_metric_nav} achieved by different approaches on the spatial domain. Our approach outperformed all the other adaptation strategies.}
    \label{pt_navier}
\end{figure}

\section{Conclusion}

This work presents a rigorous mathematical framework for neural network architecture adaptation based on a posteriori error estimation. By formulating neural network training as a continuous-time optimal control problem and discretizing it using finite element method, we derived computable error bounds that characterize how approximation error distributes across network layers. This error decomposition provides a principled mechanism for depth adaptation: layers are inserted at locations of maximum error, enabling efficient refinement of the network architecture.

The key contributions of our approach are threefold. First, we established theoretical error estimates (Corollary \ref{cor_one}) that rigorously bound the functional error arising from piecewise linear representation of network parameters. Second, we developed a practical computational framework (Algorithm \ref{Algo_full}) that implements these theoretical insights through a two-level discretization scheme, balancing theoretical rigor with computational efficiency. Third, we demonstrated through numerical experiments that our method outperforms existing architecture adaptation strategies in the small network width regime.

One drawback of the approach is the increased computational cost compared to other approaches due to the subdiscretization parameter $K$ involved (see  section \ref{discretize_state}). However, this investment yields substantial improvements in final model quality as evident from our numerical results. For applications where model accuracy is paramount, the additional training time is justified by superior performance. 

Our future work includes extending the framework to other architectures such as convolutional and recurrent neural networks, and incorporating width adaptation alongside depth adaptation.

\appendix

\section{General setting for numerical experiments}{}
   \label{hyper_parameter}
All codes were written in PyTorch. Throughout the study, we have employed the Adam optimizer for minimizing the loss function.      Our proposed approach is compared with a number of different approaches as given below: 
\begin{equation*}
 \begin{aligned}
\text{Proposed approach}&: \text{Architecture adaptation algorithm as described }\\
& \text{\ \ \ in Algorithm \ref{Algo_full}}.\\
\text{Random layer insertion}&: \text{Architecture adaptation algorithm as described in}\\
& \text{\ \ \  Algorithm \ref{Algo_full} with $n^*=\arg \min_n\{ \sE_n\}$ in line 6 of  Algorithm \ref{Algo_full}}.\\
\text{ Net2DeeperNet  }&: \text{Increasing depth of network based on  based on function }\\
& \text{\ \ \  preserving transformations. A layer is inserted at random  }\\
& \text{\ \ \ position with a small Gaussian noise added to the }\\
& \text{\ \ \ parameters to break symmetry \cite{chen2015net2net}}.\\
\text{ Baseline network (B)}&: \text{Training a randomly initialized network with the same }\\
& \text{\ \ \ final layers $T$ as obtained by our proposed approach}.\\
\text{Forward Thinking (H)}&: \text{Algorithm for layerwise adaptation proposed by}\\
& \text{\ \ \  Hettinger et al. \cite{hettinger2017forward}}.\\
\end{aligned}  
\end{equation*}
We maintain the same activation functions and hyperparameters for all the adaptation strategies in order to make a fair comparison. For ``proposed approach" and ``random layer insertion", we use the architecture described in section \ref{our_archi}. For all other approaches we use a conventional feed-forward neural network.
Note that, for all methods, the reported numerical results correspond to the model that achieved the best validation loss.  
\vspace{0.1 cm}

\noindent{\bf{\underline{Activation function employed}}}

\vspace{0.1 cm}

For all approaches, we use the {\it{tanh}} activation function in the input and hidden layers, with a {\it{linear}} activation in the output layer.

\vspace{0.1 cm}

\noindent{\bf{\underline{Random search for best initialization of parameters}}}

\vspace{0.1 cm}

Note that each approach mentioned above requires random initialization of the network parameters to start the adaptation procedure. In this work, we consider 20 random initializations of the initial small network and retain the best performing network (lowest validation loss) for subsequent adaptation.

\section{Details of hyperparameter values for different problems}{}
   \label{hyper_parameter_n}
   Details of hyperparameters used in  Algorithm \ref{Algo_full}  is provided in  Table \ref{Input_values_different_problems}.   Table \ref{Input_values_different_problems} additionally provides details on the hyperparameters used for the optimizer. 
The description of each problem is also provided below.
\begin{table}[h!]
	\caption{Details of hyperparameters  for Algorithm \ref{Algo_full}} \label{Input_values_different_problems}
	\renewcommand{\arraystretch}{1.3}
	\centering	
	\begin{tabular}{|c | c | c | c | c | c | c | c | c |c|c|c|}
 \hline
		\centering
		 Problem &$n_1$ &$K$ &$N_n$ &   $ T$ & $t_1$ & $t_T$ & $E_e$ & $b_s$ & $\ell_r$&  $\tau$\\
		\hline
   I & 5 &4 & 15& 3 & 0 &1& 1000 &100& 0.01&  200 \\ \hline
   II & 20 &4 & 10& 3 & 0 &1 &1000 &700& 0.001& 200\\ \hline
	\end{tabular} 
\end{table}
In Table \ref{Input_values_different_problems}, $b_s$ denotes the batch size, $\ell_r$ denotes the learning rate, $E_e$ denotes the maximum number of epochs for which the network is trained at each iteration of Algorithm \ref{Algo_full}. $\tau$ denotes the patience parameter, specifying the number of consecutive epochs without improvement in validation loss after which training is automatically terminated. For initializing the weights and biases,  we choose  a zero mean Gaussian noise  with standard deviation $\sigma_n=0.01$.
The description of different problems in Table \ref{Input_values_different_problems} are provided below:
\begin{equation*}
 \begin{aligned}
\text{I}&: \text{Proof of concept example in section \ref{proof_of}}.\\
\text{II }&: \text{Learning the observable to parameter map for Navier Stokes equation in section \ref{navier}}.
\end{aligned}  
\end{equation*}

\bibliographystyle{unsrt}  
\bibliography{references}

\end{document}

%% file: Macros.tex
\newcommand{\bs}[1]{\boldsymbol{#1}}

\newcommand{\LRs}[1]{\left[ #1 \right]}

\newcommand{\bx}{\boldsymbol{x}}
\newcommand{\bff}{\boldsymbol{f}}
\newcommand{\be}{\boldsymbol{e}}
\newcommand{\by}{\boldsymbol{y}}
\newcommand{\bv}{\boldsymbol{v}}

\newcommand{\bz}{\boldsymbol{z}}

\newcommand{\nor}[1]{\left\| #1 \right\|}

\newcommand{\snor}[1]{\left| #1 \right|}
\newcommand{\norm}[1]{\left \Vert #1 \right \Vert}

\def\sJ{{\mathcal J}}

\def\sUw{{\mathcal U_w}}
\def\sUb{{\mathcal U_b}}

\renewcommand{\u}{u}
\newcommand{\xb}{\bs{x}}

\newcommand{\ub}{{\bs{\u}}}

\newcommand{\Lo}{\mathcal{L}}

\newcommand{\sE}{\mathcal{E}}

\newcommand{\LRp}[1]{\left( #1 \right)} 

\newcommand{\real}{\mathbb{R}}
 
\def\bW{\boldsymbol{W}}

\def\bb{\boldsymbol{b}} 
\def\bc{\boldsymbol{c}}
\algblock{ParFor}{EndParFor}
\algnewcommand\algorithmicparfor{\textbf{For all}}
\algnewcommand\algorithmicpardo{\textbf{do in parallel}}
\algnewcommand\algorithmicendparfor{\textbf{end\ For all}}
\algrenewtext{ParFor}[1]{\algorithmicparfor\ #1\ \algorithmicpardo}
\algrenewtext{EndParFor}{\algorithmicendparfor}
